\theoremstyle{plain}
\newtheorem{theorem}{Theorem}[section]
\newtheorem{corollary}[theorem]{Corollary}
\theoremstyle{definition}
\newtheorem{definition}[theorem]{Definition}
\theoremstyle{remark}
\def\eqref#1{equation~\ref{#1}}
\def\1{\bm{1}}
\DeclareMathAlphabet{\mathsfit}{\encodingdefault}{\sfdefault}{m}{sl}
\SetMathAlphabet{\mathsfit}{bold}{\encodingdefault}{\sfdefault}{bx}{n}
\DeclareMathOperator*{\argmin}{arg\,min}
\newtheorem{baseline}[theorem]{Baseline}
\definecolor{CColor}{rgb}{0.01,0.31,0.59}
\definecolor{GGray}{rgb}{0.80,0.90,1}
\definecolor{Shady}{rgb}{0.9,0.9,0.9}
\definecolor{kaistblue}{RGB}{20,135,200}
\definecolor{kaistdarkblue}{RGB}{0,65,145}
\definecolor{urbanablue}{RGB}{19,41,75}
\definecolor{urbanaorange}{RGB}{232,74,39}
\definecolor{drp}{rgb}{0.53,0.15,0.34}
\icmltitlerunning{Merging Models on the Fly Without Retraining: A Sequential Approach to Scalable Continual Model Merging}
\begin{document}

\twocolumn[
    \icmltitle{Merging Models on the Fly Without Retraining: \\ A Sequential Approach to Scalable Continual Model Merging}



    \icmlsetsymbol{equal}{*}

    \begin{icmlauthorlist}
        \icmlauthor{Anke Tang}{whu}
        \icmlauthor{Enneng Yang}{Northeastern}
        \icmlauthor{Li Shen}{sysu}
        \icmlauthor{Yong Luo}{whu}
        \icmlauthor{Han Hu}{bit}
        \icmlauthor{Bo Du}{whu}
        \icmlauthor{Dacheng Tao}{ntu}
    \end{icmlauthorlist}


    \icmlaffiliation{whu}{School of Computer Science, Wuhan University}
    \icmlaffiliation{Northeastern}{Northeastern University}
    \icmlaffiliation{sysu}{School of Cyber Science and Technology, Shenzhen Campus of Sun Yat-sen University}
    \icmlaffiliation{bit}{School of Information and Electronics, Beijing Institute of Technology}
    \icmlaffiliation{ntu}{College of Computing \& Data Science, Nanyang Technological University, Singapore}
    
    \icmlkeywords{Machine Learning}

    \vskip 0.3in
]



\printAffiliationsAndNotice{}  

\begin{abstract}
  Deep model merging represents an emerging research direction that combines multiple fine-tuned models to harness their specialized capabilities across different tasks and domains.
  Current model merging techniques focus on merging all available models simultaneously, with weight interpolation-based methods being the predominant approaches.
  However, these conventional approaches are not well-suited for scenarios where models become available sequentially, and they often suffer from high memory requirements and potential interference between tasks.
  In this study, we propose a training-free projection-based continual merging method that processes models sequentially through orthogonal projections of weight matrices and adaptive scaling mechanisms.
  Our method operates by projecting new parameter updates onto subspaces orthogonal to existing merged parameter updates while using an adaptive scaling mechanism to maintain stable parameter distances, enabling efficient sequential integration of task-specific knowledge.
  Our approach maintains constant memory complexity to the number of models, minimizes interference between tasks through orthogonal projections, and retains the performance of previously merged models through adaptive task vector scaling.
  Extensive experiments on CLIP-ViT models demonstrate that our method achieves a 5-8\% average accuracy improvement while maintaining robust performance in different task orderings.
\end{abstract}

\section{Introduction}
\label{sec:introduction}

With the continued expansion of the scale of foundation models, the demand for efficient model development becomes increasingly pressing~\citep{zhengLearnModelFineTuning2023,yangModelMergingLLMs2024}.
Model merging, as a promising approach to address this challenge, aims to combine multiple fine-tuned models to harness their specialized capabilities in different tasks and domains~\citep{liDeepModelFusion2023,yadavSurveyModelMoErging2024,lu2024merge,khan2024sok}.
Recent studies have demonstrated the effectiveness of model merging in various scenarios, from combining language models with different task expertise~\citep{yadavResolvingInterferenceWhen2023,yi2024safety,luTwinMergingDynamicIntegration2024} to merging vision models trained in distinct domains~\citep{wortsman2022model,tam2024merging}.

The predominant approach to model merging is weight interpolation, which combines model parameters through weighted averaging of the individual models' weights~\citep{ilharcoEditingModelsTask2023,wang2024rethinking}.
This method can be formalized as $\theta_{\text{merged}} = \theta^{(0)} + \sum_{i=1}^{n} \alpha_i (\theta^{(i)} - \theta^{(0)})$, where $\theta^{(0)}$ and $\theta^{(i)}$ represent the parameters of the pre-trained model and the $i$-th expert model, respectively, and $\alpha_i$ are the interpolation coefficients, which can be applied task- or layer-wise~\citep{michaelmatenaMergingModelsFisherWeighted2022,yangAdaMergingAdaptiveModel2023}.
Theoretically, the effectiveness of weight interpolation-based merging methods is mainly influenced by two factors: data heterogeneity (the diversity of training set distribution between expert models) and training heterogeneity (differences in hyperparameter settings and optimization dynamics)~\citep{ortiz2024task,tao2024task}.

However, existing model merging approaches face several fundamental limitations that hinder their practical application and can be revisited from a continual merging perspective.
Firstly, current methods typically require concurrent access to all models during the merging process, resulting in significant memory overhead that scales linearly with the number of models being merged~\citep{shen2024efficient}.
Secondly, the law of commutation in model merging may not hold in continual merging scenarios, where the order of merging models is crucial and can affect the performance of the final merged model. This results in additional challenges for the algorithm design.
Third, sequential model availability is a common real-world scenario and lacks attention in existing studies.
Last, many current approaches necessitate additional training phases or extensive hyperparameter optimization, introducing substantial computational costs or validation demands~\citep{xisenjinDatalessKnowledgeFusion2023,tang2023concrete,zimmer2023sparse,li2024scalable}.

To address these challenges, we propose the Orthogonal Projection-based Continual Merging (OPCM) method which enables a sequential merging as new models become available.
Our approach consists of three key features:
(1) By maintaining only the current merged and pre-trained models, we sequentially process new models, achieving constant memory complexity $O(|\theta|)$.
(2) A projection-based merging mechanism processes parameter updates through orthogonal projections to minimize parameter interference between the incoming and merged models.
(3) An adaptive time-varying scaling factor that dynamically adjusts the contribution of each model during the merging process to maintain stable performance, making the method robust to different merging orders.
Together, these enable OPCM to achieve training-free and memory-efficient merging.

Our approach is built upon two key major insights: (1) the use of orthogonal projections effectively minimizes interference between different tasks while preserving the specialized capabilities of each model, and (2) an adaptive scaling strategy keeps an approximately constant distance between the merged model and the pre-trained model, ensuring stable performance of the merged model.

In summary, our contributions to this study are as follows:
\begin{itemize}[topsep=0cm, parsep=0cm, itemsep=0cm]
  \item We provide a formal mathematical framework for sequential model merging, and discuss its relationship with existing model merging methods.
  \item We propose a training-free projection-based continual merging method that processes models sequentially through orthogonal projections of weight matrices and adaptive scaling mechanisms.
  \item We conduct extensive experiments on image classification tasks to demonstrate the effectiveness of our method and show 5-8\% better accuracy on average while remaining consistently robust regardless of how tasks are ordered.
\end{itemize}

\section{Related Work}
\label{sec:related_work}

\subsection{Deep Model Fusion}

Deep model fusion is scalable and effective in various scenarios, including language modeling~\citep{zhou2024metagpt,tekin2024h,wanKnowledgeFusionLarge2024}, vision modeling~\citep{ye2023merging,huang2023experts}, and multimodal modeling~\citep{yangModelMergingLLMs2024,chen2024model}.
Following~\citep{tangFusionBenchComprehensiveBenchmark2024}, deep model fusion techniques can be categorized into three types: model ensemble~\citep{sagiEnsembleLearningSurvey2018,arpit2022ensemble,liu2023tangent}, model merging~\citep{liDeepModelFusion2023}, and model mixing~\citep{yadavSurveyModelMoErging2024,komatsuzakiSparseUpcyclingTraining2023}.
In this study, we focus on model merging, which does not change the model architecture or introduce inference costs.

\textbf{Weight interpolation-based model merging} is one of the most straightforward and intuitive methods for merging model parameters.
This approach is largely based on the observation of mode connectivity, which suggests that different solutions within the parameter space can be connected by paths that maintain a consistently low loss~\citep{danielfreemanTopologyGeometryHalfrectified2017,draxlerEssentiallyNoBarriers2019,frankleLinearModeConnectivity2020,yunisConvexityLinearMode2022}.
This method assumes that the models being merged lie in similar regions of the loss landscape, allowing for a seamless transition between them without significantly increasing the loss~\citep{izmailovAveragingWeightsLeads2019,michaelmatenaMergingModelsFisherWeighted2022,wortsman2022model,ainsworthGitReBasinMerging2023,ilharcoEditingModelsTask2023,sadrtdinov2024stay}.
In addition to its application in multi-task model fusion, weight interpolation can also be utilized during the preference alignment phase in LLM training~\citep{zheng2024weak,lin2024mitigating,chegini2024model,gorbatovski2024learn,pentyala2024paftparalleltrainingparadigm}, data-mixing~\citep{yadav2024matters,ahmadian2024mix}, auxiliary-task learning~\citep{jiang2024forkmerge}, and deep multi-objective optimization~\citep{rame2024rewarded,chen2024you,tang2024towards,dimitriadis2024pareto}.

\begin{figure}[t]
  \centering
  \includegraphics[width=0.95\linewidth]{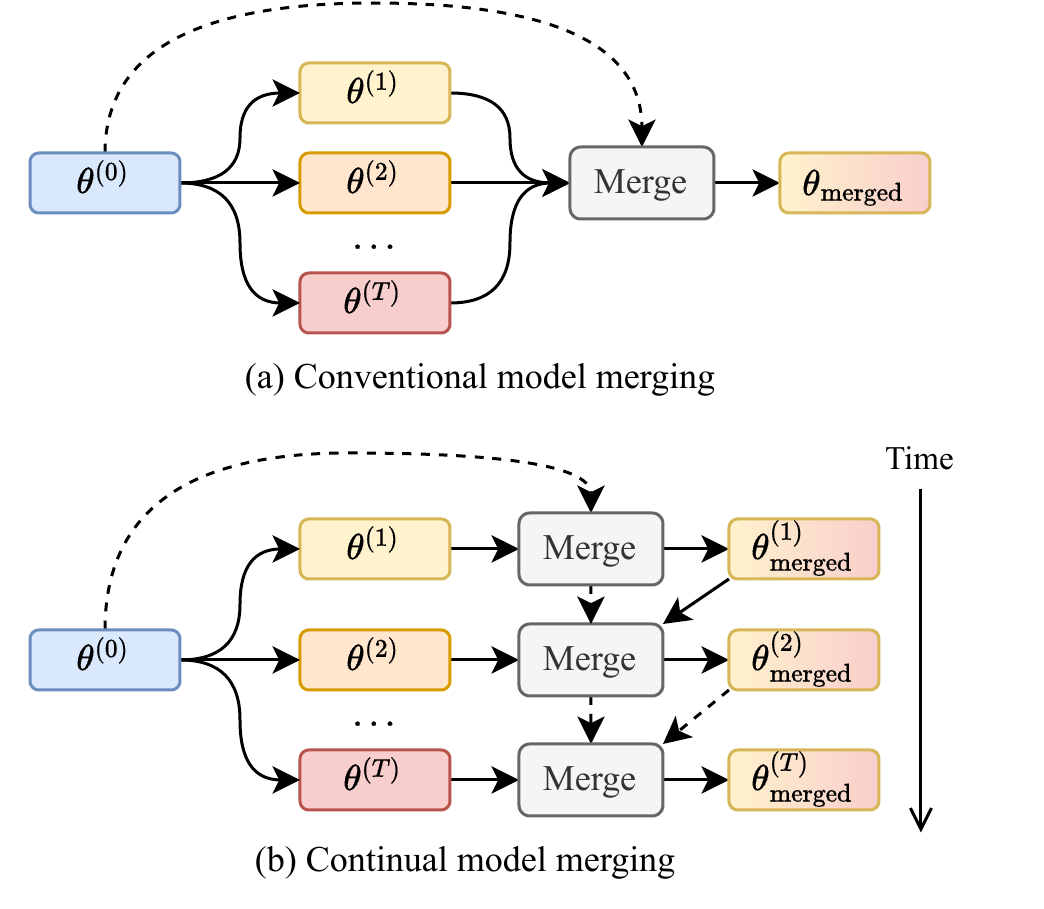}
  \vspace{-0.2cm}
  \caption{
  Comparison between conventional and continual model merging approaches.
  (a) Conventional model merging requires simultaneous access to all expert models $\{\theta^{(i)}\}_{i=1}^T$, performing merging in a single step.
  (b) The continual model merging processes models sequentially as they become available.
  }
  \label{fig:continual_merging}
  \vspace{-0.5cm}
\end{figure}

\textbf{Alignment-based model merging} represents another approach that focuses on merging models through the reduction of parameter or feature disparities.
Current research includes activation and weight matching~\citep{georgestoicaZipItMergingModels2023,xisenjinDatalessKnowledgeFusion2023,yangRepresentationSurgeryMultiTask2024,kinderman2024foldable,xu2024weight,nasery2024pleas}, utilizing channel-wise graph matching~\citep{liuDeepNeuralNetwork2022a}, and applying permutation invariant~\citep{ainsworthGitReBasinMerging2023,li2024training}.

However, the above methods require all models to be available simultaneously for merging, which limits their applicability. In contrast, our approach enables the continual merging of sequentially arriving models with higher memory efficiency.

\subsection{Continual Learning}

Continual learning addresses scenarios where models must sequentially learn new tasks while maintaining performance on previously acquired knowledge~\citep{Forgetting_Survey_2024,zhou2024continual}.
This challenging paradigm has led to several key approaches: memory-based methods that store past samples~\citep{lopez2017gradient,chaudhry2018efficient,buzzega2020dark}, architecture-based methods that modify network structures~\citep{rusu2016progressive,fernando2017pathnet,yoon2017lifelong}, regularization-based methods involve the addition of regularization loss terms~\citep{kirkpatrick2017overcoming,zenke2017continual}, subspace-based methods that use lower-dimensional projections~\citep{farajtabar2020orthogonal,doan2021theoretical}, and Bayesian methods via incorporating uncertainty estimation~\citep{nguyen2017variational}.
However, the aforementioned continual learning methods are all based on learning from the original data. In contrast, the continual merging approach proposed in this paper allows us to directly merge a sequence of models (i.e., tasks) arriving in a stream, offering an orthogonal perspective to continual learning.

\section{Rethinking Model Merging From a Continual Learning Perspective}
\label{sec:rethinking_model_merging}

In this section, we present a novel perspective on multi-task model merging through the lens of continual learning, wherein task-specific models are obtained sequentially rather than simultaneously.

\subsection{Problem Setup}
Formally, let $f_{\theta^{(0)}}: \mathcal{X} \to \mathcal{Y}$ denote the pre-trained model parameterized by $\theta^{(0)}$, where $\mathcal{X}$ represents the input space and $\mathcal{Y}$ represents the output space.
Consider $T$ task-specific models: $\{f_{\theta^{(1)}}, f_{\theta^{(2)}}, \ldots, f_{\theta^{(T)}}\}$, each fine-tuned from the pre-trained model $f_{\theta^{(0)}}$ using the corresponding data set $D_i$ for task $i$. Each model may incorporate a task-specific head $h_{\phi_{i}}: \mathcal{Y} \to \mathcal{Z}_i$. Our objective is to construct a unified model $f_{\theta_{\text{merged}}}: \mathcal{X} \to \mathcal{Y}$ that maintains high performance in all $T$ tasks while preserving task-specific heads $\{h_{\phi_{i}}\}_{i=1}^{T}$.

Initially, we define the conventional versus continual model merging approaches in formal terms and subsequently explore the challenges and current solutions from a continual learning perspective.
In Figure~\ref{fig:continual_merging}, we provide a visual comparison between conventional and continual model merging techniques, highlighting their distinctive approaches.

\begin{definition}[Conventional Model Merging]
  Conventional model merging approaches predominantly address scenarios where all expert models are available simultaneously, executing the merge operation in a single step. This process can be formally expressed as follows:
  \begin{equation}
      \theta_{\text{merged}} = \text{Merge}(\theta^{(0)}; \theta^{(1)}, \theta^{(2)}, \ldots, \theta^{(T)}).
  \end{equation}
\end{definition}

\begin{definition}[Continual Model Merging]
  Task-specific models are introduced progressively over time in the continuous model merging scenario.
  Model merging occurs incrementally in this paradigm as each new task-specific model becomes available.
  Specifically, at step $t$, we merge the current merged model parameterized by $\theta_{\text{merged}}^{(t-1)}$ with the newly trained task-specific model parameterized by $\theta^{(t)}$, yielding an updated merged model parameterized by $\theta_{\text{merged}}^{(t)}$. This process can be formalized as:
  \begin{equation}
    \theta_{\text{merged}}^{(t)} = \text{ContinualMerge}(\theta_{\text{merged}}^{(t-1)}; \theta^{(0)}, \theta^{(t)}), \,\, t \geq 2,
  \end{equation}
  where typically $\theta_{\text{merged}}^{(1)} = \theta^{(1)}$ is initialized as the first expert model and $\theta_{\text{merged}}^{(t)}$ aims to minimize the expected risk $\theta_{\text{merged}}^{(t)} = \argmin_{\theta} \mathbb{E}_{(x, y) \sim \mathcal{D}_1 \cup \dots \mathcal{D}_t} \mathcal{L}_t \left(f_{\theta}(x), y\right)$.
\end{definition}

\subsection{Opportunities \& Challenges in Continual Merging}
\label{subsec:challenges_in_continual_model_merging}

\begin{figure}[t]
  \centering
  \includegraphics[width=0.75\linewidth]{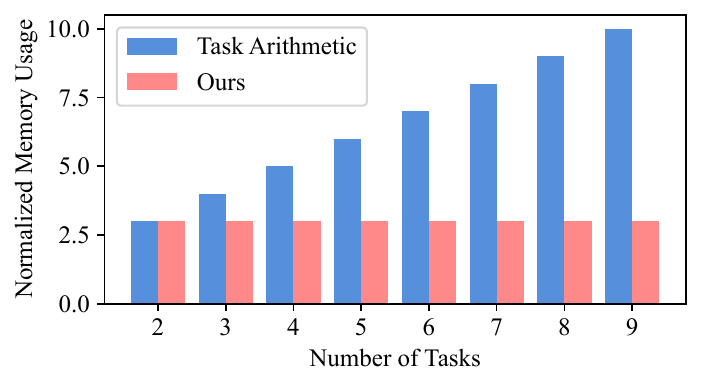}
   \vspace{-0.2cm}
  \caption{Memory complexity of task arithmetic~\cite{ilharcoEditingModelsTask2023} and our method.}
  \label{fig:memory}
  \vskip -0.5cm
\end{figure}

\textbf{Storage and memory efficiency} is a significant advantage of continual merging over traditional merging methods.
In continual merging, at each step $t$, only a fixed number of models need to be stored: the current merged model, the new model to be merged, and optionally the pre-trained base model.
This approach leads to a constant memory complexity of $O(|\theta|)$, where $|\theta|$ represents the size of an individual model. 
Crucially, this memory requirement remains constant, regardless of the total number of downstream tasks $T$ involved.
In contrast, as shown in Figure~\ref{fig:memory}, conventional merging methods typically involve maintaining all models, which results in a linear memory complexity of $O(T |\theta|)$.

\textbf{The law of commutation in model merging} refers to the property that the sequence in which models are combined does not influence the outcome of the merged model.
Formally, this commutative property is described by the equation for standard merging techniques as the following holds:
\begin{equation}
  \label{eq:commutative_merging}
  \small
  \text{Merge}(\theta^{(0)}; \textcolor{magenta}{\theta^{(i)}}, \textcolor{blue}{\theta^{(j)}}) = \text{Merge}(\theta^{(0)}; \textcolor{blue}{\theta^{(j)}}, \textcolor{magenta}{\theta^{(i)}}), \,\, \forall \textcolor{magenta}{i}, \textcolor{blue}{j}.
\end{equation}
However, in continual model merging, this commutative characteristic may not hold.
Here, the ordering of models has an impact on the ultimate performance of the merged model.
This is because, in a continual merging setting, each step of merging builds upon the previously merged model, resulting in sequential dependencies that disrupt commutativity.
In this case, consider the continual merging process ContinualMerging (CM), where:
\begin{multline}
  \label{eq:non_commutative_continual_merging}
  \text{CM}\left(\text{CM}\left(\theta_{\text{merged}}^{(i-1)}; \theta^{(0)}, \textcolor{magenta}{\theta^{(i)}}\right); \theta^{(0)}, \textcolor{blue}{\theta^{(i+1)}}\right) \neq \\
  \text{CM}\left(\text{CM}\left(\theta_{\text{merged}}^{(i-1)}; \theta^{(0)}, \textcolor{blue}{\theta^{(i+1)}}\right); \theta^{(0)}, \textcolor{magenta}{\theta^{(i)}}\right), \,\, \forall \textcolor{magenta}{i}.
\end{multline}
Here the merging operation is specific to the order, producing different outcomes.
This non-commutative nature introduces additional complexities in the knowledge preservation of preceding models, where the quality of the final merged model hinges not just on the models themselves but critically on the ordering in which they are integrated.
Moreover, \textit{as earlier parameter information is gradually diminished throughout the sequential process, the final model might not completely encapsulate the knowledge from all individual tasks}. 
Consequently, it becomes essential to develop effective merging strategies that take the non-commutation property into account, balancing sequential dependencies and task knowledge retention.

\textbf{Existing applications of continual model merging} are prevalent in online training settings, where models are updated along the training trajectory.
This approach includes techniques such as the latest weight averaging (LAWA)~\citep{kaddourStopWastingMy2022}, exponential moving average (EMA)~\citep{morales2024exponential}, and stochastic weight averaging (SWA)~\citep{izmailovAveragingWeightsLeads2019}.
These strategies, however, were not originally intended for scenarios with significant distribution shifts between consecutive models, which can be a critical challenge.
While most existing model merging approaches assume the simultaneous availability of expert models, research on continual model merging remains limited~\citep{liDeepModelFusion2023}.

Some conventional model merging techniques can be revised and adapted within the framework of continual model merging. As an example, consider task arithmetic:
\begin{baseline}[Continual Task Arithmetic]
  \label{baseline:continual_task_arithmetic}
  This is a straightforward extension of the conventional task arithmetic method to the continual model merging scenario.
  The update rule can be expressed as follows:
  \begin{equation}
    \theta_{\text{merged}}^{(t)} = \theta_{\text{merged}}^{(t-1)} + \lambda (\theta^{(t)} - \theta^{(0)}),
  \end{equation}
  where $\lambda$ is a scalar hyperparameter that controls the contribution of the new task-specific model to the merged model.
\end{baseline}

To summarize, continual model merging methods align with real-world applications where tasks emerge progressively and offer improved memory efficiency.
However, it can be non-commutative, which introduces unique challenges in task knowledge retention and interference resolution.

\section{Methodology}
\label{sec:methodology}

\begin{figure}[t]
  \centering
  \includegraphics[width=0.75\linewidth]{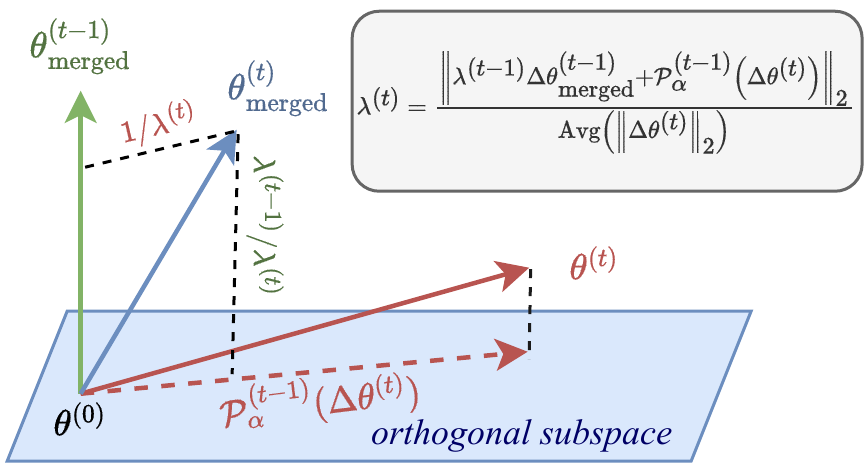}
  \vskip -0.1cm
  \caption{
    \textbf{Subspace view.}
    Geometric interpretation of the proposed continual model merging approach, illustrating the orthogonal projection and adaptive scaling mechanisms.
  }
  \label{fig:subspace_view}
  \vskip -0.4cm
\end{figure}

Our approach to continual model merging is motivated by three key insights:
(1) \textit{Orthogonality Preservation:} 
To minimize interference between tasks, the parameter updates for new tasks should be approximately orthogonal to the existing merged model's parameter updates;
(2) \textit{Magnitude Control:} 
The magnitude of parameter changes should remain stable throughout the merging process to prevent drift from the pre-trained model's loss basin;
(3) \textit{Information Retention:} 
The merged model should preserve task-specific information from all previously seen models while accommodating new tasks efficiently.

Building on these insights, we propose a projection-based continual merging approach that
(1) Projects new task vectors onto subspaces orthogonal to the current merged model;
(2) Employs adaptive time-varying scaling to maintain stable parameter drift from the base model;
(3) Preserves task-specific information through careful parameter updates.

The complete procedure is outlined in Algorithm~\ref{alg:continual_model_merging} and illustrated in Figure~\ref{fig:subspace_view}. 
Starting with the first task-specific model $f_{\theta^{(1)}}$, our method iteratively incorporates new models as they become available.
For each new model, we handle the parameters in two distinct manners: weight matrices in linear layers undergo orthogonal projection to minimize interference, whereas other parameters, such as biases and those in non-linear layers, are averaged directly.

\begin{algorithm}[t]
  \caption{Continual Model Merging}
  \label{alg:continual_model_merging}
  \begin{algorithmic}[1]
    \STATE Initialize $\theta_{\text{merged}}^{(1)} = \theta^{(1)}$, average norm of the task vectors $n = \|\Delta \theta^{(1)}\|_2$, scaling factor $\lambda^{(1)} = 1$.
    \FOR{$t = 2$ to $T$}
    \FOR{weight matrices $W \in \mathbb{R}^{m \times n}$ in linear layers}
    \STATE $\Delta W_{\text{merged}}^{(t-1)} \leftarrow W_{\text{merged}}^{(t-1)} - W^{(0)}$
    \STATE $\Delta W^{(t)} \leftarrow W^{(t)} - W^{(0)}$
    \STATE $\Delta W_{\text{proj}}^{(t)} \leftarrow \mathcal{P}^{(t-1)}_{\alpha} \left(\Delta W^{(t)}; \Delta W_{\text{merged}}^{(t-1)}\right)$
    \ENDFOR
    \FOR{other parameters $p$}
    \STATE $\Delta p^{(t)} \leftarrow p^{(t)} - p^{(0)}$
    \ENDFOR
    \STATE $\Delta \theta_{\text{merged}}^{(t-1)} \leftarrow \theta_{\text{merged}}^{(t-1)} - \theta^{(0)}$
    \STATE Concatenate $\Delta W_{\text{proj}}^{(t)}$ and $\Delta p^{(t)}$ into $\Delta \theta_{\text{proj}}^{(t)}$
    \STATE $n \leftarrow \frac{t-1}{t} n + \frac{1}{t} \|\Delta \theta^{(t)}\|_2$
    \STATE $\lambda^{(t)} \leftarrow {\left\|\lambda^{(t-1)} \Delta \theta_{\text{merged}}^{(t-1)} + \Delta \theta_{\text{proj}}^{(t)} \right\|_2}/{n}$
    \STATE $\theta_{\text{merged}}^{(t)} \leftarrow \theta^{(0)} + \left(\lambda^{(t-1)} \Delta \theta_{\text{merged}}^{(t-1)} + \Delta \theta_{\text{proj}}^{(t)}\right) / \lambda^{(t)}$
    \ENDFOR
    \STATE \textbf{return} $\theta_{\text{merged}}^{(T)}$
    \vskip -0.5cm
  \end{algorithmic}
\end{algorithm}

\textbf{An overview of the update rule} is provided below.
For weight matrices $W \in \mathbb{R}^{m \times n}$ in linear layers, we propose the following update rule:
\begin{equation}
  \small
  \label{eq:linear_weight_update_rule}
  W_{\text{merged}}^{(t)} = W^{(0)} + \frac{\lambda^{(t-1)} \Delta W_{\text{merged}}^{(t-1)} + \mathcal{P}^{(t-1)}_{\alpha} \left(\Delta W^{(t)}\right)}{\lambda^{(t)}},
\end{equation}
where $\Delta W_{\text{merged}}^{(t-1)} = W_{\text{merged}}^{(t-1)} - W^{(0)}$ and $\Delta W^{(t)} = W^{(t)} - W^{(0)}$ denote the element-wise differences between the model parameters and the pre-trained model, commonly referred to as task vectors or delta parameters~\citep{ilharcoEditingModelsTask2023,yuLanguageModelsAre2023}.
$\mathcal{P}^{(t-1)}_{\alpha}(\cdot)$ represents a projection mapping that projects the task vector $\Delta W^{(t)}$ onto the subspace spanned by $\Delta W_{\text{merged}}^{(t-1)}$.
When a new task model (shown in solid red arrow in Figure~\ref{fig:subspace_view}) arrives, its difference from the pre-trained model $\theta^{(0)}$ is projected onto this orthogonal subspace using the projection operator $\mathcal{P}^{(t-1)}_{\alpha}$.
$\lambda^{(t-1)}$ denotes a time-varying scaling factor that controls the contribution of the projected task vector to the merged model.
By induction, we can expand Eq.(\ref{eq:linear_weight_update_rule}) to obtain the general term formula:
\begin{equation}
  \label{eq:general_term_formula_1}
  W_{\text{merged}}^{(t)} = W^{(0)} + \frac{1}{\lambda^{(t)}} \sum_{i=1}^{t} \mathcal{P}^{(i-1)}_{\alpha} \left(\Delta W^{(i)}\right).
\end{equation}
We provide a formal proof in Theorem~\ref{thm:general_term_formula}. This formulation can be interpreted as a more generalized form of Task Arithmetic, where the task vectors are projected onto a subspace spanned by the previous merged model.

For bias in linear layers and parameters in other layers, we simply set $\mathcal{P}^{(t-1)}_{\alpha}$ in Eq.(\ref{eq:linear_weight_update_rule}) as the identity mapping. Thus the update rule is simplified as follows:
\begin{equation}
  p_{\text{merged}}^{(t)} = p^{(0)} + \frac{\lambda^{(t-1)} \Delta p^{(t-1)}_{\text{merged}} + \Delta p^{(t)}}{\lambda^{(t)}}.
\end{equation}
The general term formula degenerates to a similar form as Task Arithmetic, i.e., $p_{\text{merged}}^{(t)} = p^{(0)} + \frac{1}{\lambda^{(t)}} \sum_{i=1}^{t} \Delta p^{(i)}$.

\textbf{The projection mapping $\mathcal{P}^{(t-1)}_{\alpha}$} is designed to ensure orthogonality between the incoming task vector $\Delta W^{(t)}$ and the previous merged model $\Delta W_{\text{merged}}^{(t-1)}$ while maintaining proximity between $\mathcal{P}^{(t-1)}_{\alpha} \left(\Delta W^{(t)}\right)$ and $\Delta W^{(t)}$.

To achieve this, we first perform a full singular value decomposition (SVD) on the previous merged model $\Delta W_{\text{merged}}^{(t-1)} = U^{(t-1)} \Sigma^{(t-1)} V^{(t-1)}$,
where $U^{(t-1)} \in \mathbb{R}^{m \times m}$ and $V^{(t-1)} \in \mathbb{R}^{n \times n}$ comprise orthonormal basis vectors, and $\Sigma^{(t-1)} \in \mathbb{R}^{m \times n}$ is a (rectangular) diagonal matrix containing the singular values~\citep{olverAppliedLinearAlgebra2018}.
The proposed projection mapping $\mathcal{P}^{(t-1)}_{\alpha}$ is then defined as:
\begin{equation}
  \small
  \mathcal{P}^{(t-1)}_\alpha\left(\Delta W^{(t)}\right) = \sum_{i,j=r_{\alpha}, i\neq j}^{m, n} {\left\langle \Delta W^{(t)}, u_i v_j^T \right\rangle}_F u_i v_j^T,
\end{equation}
where $u_i$ and $v_j$ denote the $i$-th column of $U^{(t-1)}$ and the $j$-th column of $V^{(t-1)}$, respectively.
$\alpha$ is the projection threshold hyper-parameter, which controls the balance between retaining existing knowledge and incorporating knowledge from new tasks.
$r_{\alpha}$ represents the minimal rank of $\Delta W_{\text{merged}}^{(t-1)}$ such that $\sum_{i=1}^{r_{\alpha}} \sigma_i \geq \alpha \sum_{i=1}^{\min(m, n)} \sigma_i$.

\begin{figure}[t]
  \centering
  \includegraphics[width=0.83\linewidth]{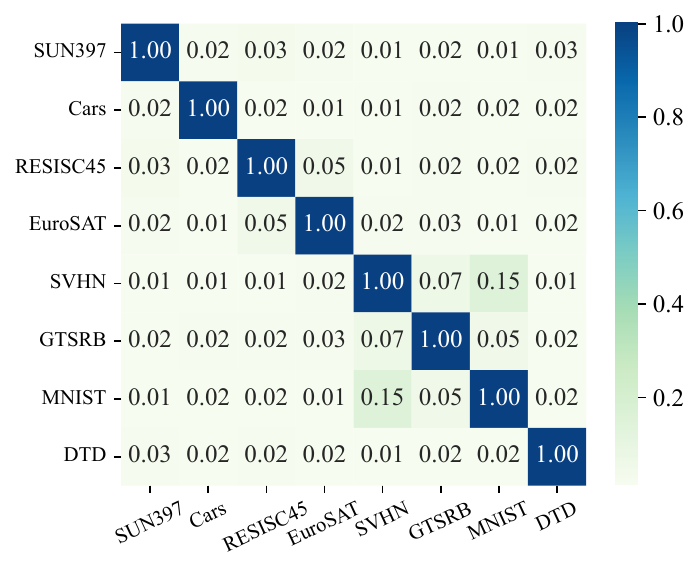}
  \vspace{-0.4cm}
  \caption{
    The cosine similarity between task vectors of ViT-B/32.
  }
  \label{fig:cosine_similarity_vit_b_32_ta8}
  \vskip -0.3cm
\end{figure}

\textbf{The adaptive time-varying scaling factor $\lambda^{(t)}$} is introduced to maintain a consistent magnitude of the merged model's deviation from the pre-trained model throughout the merging process, specifically ensuring that the Frobenius norm $\|W_{\text{merged}}^{(t)} - W^{(0)} \|_2$ remains stable over time.
In this study, the scaling factor $\lambda^{(t)}$ is computed as follows:
\begin{equation}
  \small
  \lambda^{(1)} =1, \lambda^{(t)} = \frac{\left\|\lambda^{(t-1)} \Delta \theta_{\text{merged}}^{(t-1)} + \mathcal{P}^{(t-1)}_{\alpha} \left(\Delta \theta^{(t)}\right) \right\|_2}{ \text{Avg}\left(\left\|\Delta \theta^{(t)}\right\|_2\right) },
\end{equation}
where $\Delta \theta_{\text{merged}}^{(t-1)} = \theta_{\text{merged}}^{(t-1)} - \theta^{(0)}$ and $\Delta \theta^{(t)} = \theta^{(t)} - \theta^{(0)}$ represent flattened vectors of parameter differences, and $\text{Avg}(\|\Delta \theta^{(t)}\|_2) = \frac{1}{t} \sum_{i=1}^{t} \|\Delta \theta^{(i)}\|_2$ is the average norm.
With slight abuse of notation, we apply $\mathcal{P}^{(t-1)}_{\alpha}$ directly to the task vector rather than to individual weight matrices for notational simplicity.

Alternatively, the time-varying scaling factor can be set to $\lambda^{(t)} = \sqrt{t}$. This choice is motivated by empirical observations from prior work indicating that task vectors from different tasks tend to be approximately orthogonal~\citep{ilharcoEditingModelsTask2023,tangParameterEfficientMultitask2024,jin2024fine}, i.e., $\left\langle \Delta \theta^{(i)}, \Delta \theta^{(j)} \right\rangle \approx \delta_{ij}$ generally holds for all $i, j \in [1, t]$.
The empirical validation of task vector orthogonality is presented in Figure~\ref{fig:cosine_similarity_vit_b_32_ta8}, which analyzes 8 task-specific ViT-B/32 models.
For a more comprehensive analysis across model sizes and task quantities, we present additional results in the appendix: Figure~\ref{fig:clip-vit-b-32-cos-similarity} (ViT-B/32), Figure~\ref{fig:clip-vit-b-16-cos-similarity} (ViT-B/16), and Figure~\ref{fig:clip-vit-l-14-cos-similarity} (ViT-L/14), each evaluated on 20 downstream tasks.
This orthogonality property makes $\sqrt{t}$ a natural choice for the scaling factor, as it helps maintain the magnitude of parameter changes across merging steps.

\section{Theoretical Analysis}
\label{sec:theoretical_analysis}

We now present some theoretical analysis of the proposed continual merging method, establishing key properties regarding orthogonality and stability.

\begin{theorem}[Orthogonality of Projected Task Vectors]
  \label{thm:orthogonality}
  Given a sequence of task-specific models $\{f_{\theta^{(t)}}\}_{t=1}^T$ fine-tuned from a pre-trained model $f_{\theta^{(0)}}$, for any time step $t$, the projected task vector $\mathcal{P}^{(t-1)}_{\alpha}(\Delta W^{(t)})$ obtained from the update rule in Eq.(\ref{eq:linear_weight_update_rule}) is orthogonal to $\Delta W_{\text{merged}}^{(t-1)}$, i.e.,
  \begin{equation}
    {\left\langle \mathcal{P}^{(t-1)}_{\alpha}\left(\Delta W^{(t)}\right), \Delta W_{\text{merged}}^{(t-1)} \right\rangle}_F = 0.
  \end{equation}
\end{theorem}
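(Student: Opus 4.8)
The plan is to exploit the fact that the full SVD $\Delta W_{\text{merged}}^{(t-1)} = U^{(t-1)} \Sigma^{(t-1)} (V^{(t-1)})^T$ supplies an orthonormal basis of $\mathbb{R}^{m\times n}$ with respect to the Frobenius inner product, and then to observe that the projection $\mathcal{P}^{(t-1)}_\alpha$ only ever retains basis elements indexed by pairs with $i \neq j$, whereas $\Delta W_{\text{merged}}^{(t-1)}$ lives entirely in the span of the ``diagonal'' elements $i = j$. Orthogonality then follows immediately by bilinearity of the inner product, with no estimation required.

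Concretely, I would first record the rank-one outer products $\{u_i v_j^T\}$ and verify that they are orthonormal: writing the Frobenius inner product as a trace, $\langle u_i v_j^T, u_k v_l^T\rangle_F = \Tr\!\big(v_j u_i^T u_k v_l^T\big) = (u_i^T u_k)(v_l^T v_j) = \delta_{ik}\delta_{jl}$, where the last equality uses that the columns of $U^{(t-1)}$ and of $V^{(t-1)}$ are each orthonormal. Second, I would rewrite the merged task vector in this basis as $\Delta W_{\text{merged}}^{(t-1)} = \sum_{k=1}^{\min(m,n)} \sigma_k\, u_k v_k^T$, a combination supported only on index pairs with $i = j = k$. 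Third, I would expand the target inner product by linearity, $\langle \mathcal{P}^{(t-1)}_\alpha(\Delta W^{(t)}), \Delta W_{\text{merged}}^{(t-1)}\rangle_F = \sum_{i\neq j,\; i,j\ge r_\alpha} \langle \Delta W^{(t)}, u_i v_j^T\rangle_F \,\langle u_i v_j^T, \Delta W_{\text{merged}}^{(t-1)}\rangle_F$, and evaluate the right-hand factor using the orthonormality established above: $\langle u_i v_j^T, \Delta W_{\text{merged}}^{(t-1)}\rangle_F = \sum_k \sigma_k \delta_{ik}\delta_{jk}$, which vanishes whenever $i \neq j$. Hence every summand is zero and the claim follows.

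The step requiring the most care is the index bookkeeping rather than any deep argument: the result hinges entirely on the projection definition restricting the sum to $i \neq j$ (the off-diagonal outer products), since those are exactly the basis directions orthogonal to the span of the retained singular triples of $\Delta W_{\text{merged}}^{(t-1)}$. I would double-check that the summation range (indices from $r_\alpha$ up to $m$ and $n$ respectively, with $i \neq j$) is consistent with this reading. It is also worth emphasizing that neither the rank threshold $r_\alpha$ nor the numerical values of the coefficients $\langle \Delta W^{(t)}, u_i v_j^T\rangle_F$ play any role in the argument: they affect only how much of $\Delta W^{(t)}$ is retained, not the orthogonality of the retained component to $\Delta W_{\text{merged}}^{(t-1)}$.
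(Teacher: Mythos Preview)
Your proposal is correct and follows essentially the same route as the paper's proof: expand the Frobenius inner product by bilinearity and observe that each term $\langle u_i v_j^T,\, U^{(t-1)}\Sigma^{(t-1)}(V^{(t-1)})^T\rangle_F$ vanishes for $i\neq j$ by orthonormality of the singular vectors. Your version is in fact more explicit than the paper's, since you spell out the trace computation $\langle u_i v_j^T, u_k v_l^T\rangle_F = \delta_{ik}\delta_{jl}$ and the diagonal expansion $\Delta W_{\text{merged}}^{(t-1)} = \sum_k \sigma_k u_k v_k^T$, whereas the paper simply asserts the key vanishing ``due to the orthogonality of singular vectors.''
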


Similar to~\citep{xiong2024multi}, we can define the merging gap for $i$-th task as follows:
\begin{small}
  \begin{multline}
    G_i = \mathcal{L}_i\left( \theta^{(0)} + \sum_{j=1}^{t} \eta_j \mathcal{P}^{(j-1)}_{\alpha} \left(\Delta \theta^{(i)}\right) \right) \\
    - \mathcal{L}_i\left( \theta^{(0)} + \eta_i \mathcal{P}^{(i-1)}_{\alpha} \left(\Delta \theta^{(i)}\right) \right).
  \end{multline}
\end{small}
Where $\{\eta_i\}_{i=1}^t$ is a sequence of scaling factors, and $\mathcal{L}_i$ is the loss function for $i$-th task.
Apply Taylor expansion to $G_i$ at $\theta^{(0)}$ and ignore the higher-order terms, we obtain:
\begin{equation}
  \small
  G_i \approx \nabla_{\theta^{(0)}} \mathcal{L}_i^\top \left(\sum_{j=1, j\neq i}^t \eta_j \mathcal{P}^{(j-1)}_{\alpha} \left(\Delta \theta^{(i)}\right)\right).
\end{equation}
At time step $t$ in continual merging, we have $\eta_i = 1/ {\lambda^{(t)}}$ for all $i \in [1, t]$. $G_t$ is the merging gap for $t$-th task, which can be simplified as
$G_t \approx \nabla_{\theta^{(0)}} \mathcal{L}_t^\top \left( \frac{\lambda^{(t-1)}}{\lambda^{(t)}} \Delta \theta_{\text{merged}}^{(t-1)} \right)$.
On the other hand, since the task vector $\Delta \theta^{(i)}$ represents the cumulative effect of gradient updates throughout the training process, we can express the merging gap in terms of the inner product between task vectors as $G_t \approx C {\Delta \theta^{(t)}}^\top \Delta \theta_{\text{merged}}^{(t-1)}$, where $C$ is a constant~\citep{tangParameterEfficientMultitask2024,xiong2024multi}.
Consequently, enforcing orthogonality between these vectors (${\Delta \theta^{(t)}}^\top \Delta \theta_{\text{merged}}^{(t-1)} = 0$) serves to minimize the gap $G_t$.

\begin{theorem}[Bounded Parameter Distance]
  \label{thm:bounded_distance}
  Under the update rule in Eq.(\ref{eq:linear_weight_update_rule}) with the empirical choice of scaling factor $\lambda^{(t)} = \sqrt{t}$, the distance between the merged model and the pre-trained model remains bounded:
  \begin{equation}
    \left\|W_{\text{merged}}^{(t)} - W^{(0)}\right\|_F^2 \leq \max_{i \in [1,t]} \left\|\Delta W^{(i)}\right\|_F^2.
  \end{equation}
\end{theorem}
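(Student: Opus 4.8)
The plan is to reduce everything to the closed-form general term formula in Eq.(\ref{eq:general_term_formula_1}). Substituting the empirical scaling $\lambda^{(t)} = \sqrt{t}$ into that identity gives
\begin{equation}
  W_{\text{merged}}^{(t)} - W^{(0)} = \frac{1}{\sqrt{t}} \sum_{i=1}^{t} \mathcal{P}^{(i-1)}_{\alpha}\left(\Delta W^{(i)}\right),
\end{equation}
so that, writing $P_i := \mathcal{P}^{(i-1)}_{\alpha}(\Delta W^{(i)})$ for brevity, the quantity to bound is $\|W_{\text{merged}}^{(t)} - W^{(0)}\|_F^2 = \frac{1}{t}\|\sum_{i=1}^t P_i\|_F^2$. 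Everything then hinges on controlling the squared norm of the sum of projected task vectors.

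The key step is to show that all cross terms in $\|\sum_i P_i\|_F^2$ vanish, which I would do inductively rather than by expanding the whole sum at once. Writing the partial sums $A_k = \sum_{i=1}^k P_i$, I expand $\|A_k\|_F^2 = \|A_{k-1}\|_F^2 + 2\langle A_{k-1}, P_k\rangle_F + \|P_k\|_F^2$. The crucial observation is that $A_{k-1}$ is a positive multiple of $\Delta W_{\text{merged}}^{(k-1)}$, namely $A_{k-1} = \lambda^{(k-1)}\Delta W_{\text{merged}}^{(k-1)}$ by Eq.(\ref{eq:general_term_formula_1}) at step $k-1$, so Theorem~\ref{thm:orthogonality} applies verbatim and forces $\langle A_{k-1}, P_k\rangle_F = 0$. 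This telescopes to $\|\sum_{i=1}^t P_i\|_F^2 = \sum_{i=1}^t \|P_i\|_F^2$, turning the problematic cross terms into a clean sum of squared projected norms.

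To finish, I would invoke two elementary facts. First, each $\mathcal{P}^{(i-1)}_{\alpha}$ is an orthogonal projection in the Frobenius inner product onto the span of a subset of the rank-one frame $\{u_p v_q^T\}$, which is orthonormal because $U^{(i-1)}$ and $V^{(i-1)}$ have orthonormal columns; Bessel's inequality then yields the norm-nonincreasing bound $\|P_i\|_F^2 \leq \|\Delta W^{(i)}\|_F^2$. Second, the arithmetic mean of $t$ nonnegative numbers is at most their maximum. Chaining these,
\begin{equation}
  \left\|W_{\text{merged}}^{(t)} - W^{(0)}\right\|_F^2 = \frac{1}{t}\sum_{i=1}^{t}\|P_i\|_F^2 \leq \frac{1}{t}\sum_{i=1}^{t}\left\|\Delta W^{(i)}\right\|_F^2 \leq \max_{i\in[1,t]}\left\|\Delta W^{(i)}\right\|_F^2,
\end{equation}
which is exactly the claimed bound.

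The main obstacle I anticipate is resisting the temptation to claim pairwise orthogonality of the $P_i$. Theorem~\ref{thm:orthogonality} only certifies that each $P_k$ is orthogonal to the accumulated merged task vector $\Delta W_{\text{merged}}^{(k-1)}$, not to each earlier $P_i$ individually, so a direct expansion of the full sum would leave cross terms $\langle P_i, P_j\rangle_F$ that are not obviously zero. The inductive grouping above is precisely what lets the orthogonality be applied to the running partial sum $A_{k-1}$ instead. A minor point to check along the way is the first term ($i=1$), where $\mathcal{P}^{(0)}_{\alpha}$ is the identity and $\Delta W_{\text{merged}}^{(0)} = 0$, so the base case of the induction holds trivially and the norm-nonincreasing property is immediate for the identity as well.
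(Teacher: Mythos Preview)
Your proof is correct and essentially the same as the paper's: both are inductions that use Theorem~\ref{thm:orthogonality} to kill the cross term at each step and the norm-nonincreasing property of $\mathcal{P}^{(t-1)}_{\alpha}$ to bound the fresh contribution, then finish with a convex-combination argument. The only cosmetic difference is that you route through the closed-form sum in Eq.(\ref{eq:general_term_formula_1}) and induct on the partial sums $A_k=\lambda^{(k)}\Delta W_{\text{merged}}^{(k)}$, whereas the paper inducts directly on the recursive update rule in Eq.(\ref{eq:linear_weight_update_rule}); these are the same recursion up to the scalar $\sqrt{k}$.
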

The proposed continual merging method ensures that the merged model maintains a stable distance from the pre-trained model throughout the merging process, preventing parameter drift that could lead to catastrophic forgetting.

\begin{corollary}[Preservation of Task Information]
  \label{cor:preservation}
  For any time step $t$, the merged model preserves information from all previous tasks in the following sense:
  \begin{equation}
    W_{\text{merged}}^{(t)} - W^{(0)} = \frac{1}{\lambda^{(t)}} \sum_{i=1}^t \mathcal{P}^{(i-1)}_{\alpha}\left(\Delta W^{(i)}\right).
  \end{equation}
\end{corollary}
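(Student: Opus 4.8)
The plan is to prove the identity by induction on the merging step $t$, after rewriting the update rule in Eq.(\ref{eq:linear_weight_update_rule}) in terms of the increment $\Delta W_{\text{merged}}^{(t)} := W_{\text{merged}}^{(t)} - W^{(0)}$. Subtracting $W^{(0)}$ from both sides of Eq.(\ref{eq:linear_weight_update_rule}) and clearing the denominator $\lambda^{(t)}$ gives the clean recursion
\begin{equation}
  \lambda^{(t)} \Delta W_{\text{merged}}^{(t)} = \lambda^{(t-1)} \Delta W_{\text{merged}}^{(t-1)} + \mathcal{P}^{(t-1)}_{\alpha}\left(\Delta W^{(t)}\right).
\end{equation}
The key observation is that it is the \emph{scaled} quantity $S^{(t)} := \lambda^{(t)} \Delta W_{\text{merged}}^{(t)}$, rather than $\Delta W_{\text{merged}}^{(t)}$ itself, that evolves purely additively; this is what lets the nested $\lambda$-normalizations telescope instead of compounding into an unwieldy product of fractions.

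First I would establish the base case. At $t=1$ the algorithm sets $W_{\text{merged}}^{(1)} = W^{(1)}$ and $\lambda^{(1)} = 1$, so $S^{(1)} = \Delta W_{\text{merged}}^{(1)} = \Delta W^{(1)}$; adopting the convention that $\mathcal{P}^{(0)}_{\alpha}$ is the identity map (there is no prior merged model to project against), this matches $\sum_{i=1}^{1} \mathcal{P}^{(i-1)}_{\alpha}(\Delta W^{(i)})$. For the inductive step I would assume $S^{(t-1)} = \sum_{i=1}^{t-1} \mathcal{P}^{(i-1)}_{\alpha}(\Delta W^{(i)})$ and substitute it into the recursion, so that $S^{(t)} = S^{(t-1)} + \mathcal{P}^{(t-1)}_{\alpha}(\Delta W^{(t)}) = \sum_{i=1}^{t} \mathcal{P}^{(i-1)}_{\alpha}(\Delta W^{(i)})$. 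Dividing through by $\lambda^{(t)}$ recovers exactly the claimed formula, which is the statement already recorded as the general-term formula in Eq.(\ref{eq:general_term_formula_1}) and Theorem~\ref{thm:general_term_formula}.

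Since the linear algebra here is elementary, I do not expect a genuine obstacle in the derivation; the only point requiring care is the bookkeeping at the boundary of the recursion. Specifically, I would make explicit the convention $\mathcal{P}^{(0)}_{\alpha} = \mathrm{Id}$ so that the $i=1$ term of the sum is literally $\Delta W^{(1)}$, and I would confirm that the same argument applies verbatim to the non-projected parameters (biases and non-linear-layer weights) by taking $\mathcal{P}^{(t-1)}_{\alpha}$ to be the identity throughout, yielding the degenerate Task-Arithmetic form noted after Eq.(\ref{eq:general_term_formula_1}). Because the corollary concerns only the algebraic form of the accumulated update and never invokes the specific definition of the projection, it holds for any choice of $\mathcal{P}^{(t-1)}_{\alpha}$; the orthogonality of the summands (Theorem~\ref{thm:orthogonality}) and the norm bound (Theorem~\ref{thm:bounded_distance}) are what endow this accumulated form with its interpretation as information-preserving, but neither is needed to establish the identity itself.
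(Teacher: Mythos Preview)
Your proposal is correct and matches the paper's approach: the corollary is stated as an immediate consequence of the general-term formula (Eq.~(\ref{eq:general_term_formula_1}) / Theorem~\ref{thm:general_term_formula}), which the paper proves by the same induction on $t$ with the same convention $\mathcal{P}^{(0)}_{\alpha}=\mathrm{Id}$ and $\lambda^{(1)}=1$. Your introduction of the scaled quantity $S^{(t)}=\lambda^{(t)}\Delta W_{\text{merged}}^{(t)}$ is a clean way to see the telescoping, but it is not a different argument from the paper's direct substitution.
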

This formulation follows directly from Eq.(\ref{eq:general_term_formula_1}).
This corollary demonstrates that our continual merging approach maintains a weighted combination of all previously learned task-specific changes, where each task's contribution is preserved through its projection.
\section{Experiments}
\label{sec:experiments}

In this section, we present the experimental results of our continual merging approach and the ablation studies for a comprehensive analysis of our method. Code is publicly available at \url{https://github.com/tanganke/opcm}.

\subsection{Experimental Setup}

\textbf{Models and datasets}.
We first fine-tune the CLIP-ViT models~\citep{radford2021learning} on up to 20 downstream tasks.
Following~\citep{wang2024localizing}, we evaluate our approach on three task sets of increasing size: a base set of 8 tasks, an extended set of 14 tasks, and the complete set of 20 tasks.
This hierarchical grouping allows us to analyze how our method scales with increasing task diversity and complexity.
Additional experimental details are provided in Appendix~\ref{sec:fine-tuned_models}.

\textbf{Evaluation metrics}.
To evaluate our merged models, we employ two key metrics: average accuracy (ACC) and backward transfer (BWT)~\citep{lin2022beyond}, defined as follows:
\begin{equation}
\small
  \text{BWT}              = \frac{\sum_{i=1}^{T-1} \text{ACC}_i\left(\theta_{\text{merged}}^{(T)}\right) - \text{ACC}_i\left(\theta_{\text{merged}}^{(i)}\right)}{T-1},
\end{equation}
where $\theta_{\text{merged}}^{(i)}$ and $\theta_{\text{merged}}^{(T)}$ are the parameters of the merged model at $i$-th step and the final merged model, respectively.
$\text{ACC}_i(\cdot)$ is the accuracy on the $i$-th task's test data.

\begin{table*}[tp]
  \centering
  \caption{The performance comparison of continual merging methods. We report the average accuracy and backward transfer (BWT) of the merged models. The best results are highlighted in bold. We abbreviate `Continual' as `C.' in the table to save space.}
  \label{tab:continual_merging}
  \vskip 0.1in
  \small
  \setlength{\tabcolsep}{4pt}
  \begin{tabular}{ccccccccccc}
    \toprule
                                                             & \multirow{3}{*}{Method} & \multicolumn{3}{c}{ViT-B/32} & \multicolumn{3}{c}{ViT-B/16} & \multicolumn{3}{c}{ViT-L/14}                                                                                                                                                 \\
    \cmidrule(lr){3-5} \cmidrule(lr){6-8} \cmidrule(lr){9-11}
                                                             &                         & 8 tasks                      & 14 tasks                     & 20 tasks                     & 8 tasks               & 14 tasks              & 20 tasks              & 8 tasks               & 14 tasks              & 20 tasks              \\
    \midrule
                                                             & Pre-Trained             & 48.1                         & 56.9                         & 55.6                         & 55.4                  & 62.0                  & 59.8                  & 64.9                  & 69.1                  & 65.6                  \\
                                                             & Fine-Tuned              & 90.4                         & 89.3                         & 89.8                         & 92.4                  & 91.3                  & 91.6                  & 94.3                  & 93.4                  & 93.5                  \\
                                                             & C. Fine-Tuned           & 79.8                         & 67.4                         & 62.6                         & 82.9                  & 72.2                  & 68.2                  & 90.0                  & 70.9                  & 77.7                  \\
    \midrule
    \multirow{4}{*}{\rotatebox[origin=c]{90}{ACC$\uparrow$}} & Average (SWA)           & 66.3$\pm$0.0                 & 65.4$\pm$0.0                 & 61.1$\pm$0.0                 & 72.3$\pm$0.0          & 69.7$\pm$0.0          & 64.8$\pm$0.0          & 80.0$\pm$0.0          & 77.5$\pm$0.0          & 71.1$\pm$0.0          \\
                                                             & C. Task Arithmetic      & 67.5$\pm$0.0                 & 66.5$\pm$0.0                 & 60.6$\pm$0.0                 & 77.1$\pm$0.0          & 70.9$\pm$0.0          & 64.2$\pm$0.0          & 82.1$\pm$0.0          & 77.9$\pm$0.0          & 70.3$\pm$0.0          \\
                                                             & C. Ties-Merging         & 49.0$\pm$10.2                & 66.2$\pm$0.6                 & 59.9$\pm$0.7                 & 66.8$\pm$3.7          & 70.5$\pm$0.8          & 63.0$\pm$1.6          & 64.3$\pm$7.0          & 78.0$\pm$0.6          & 68.3$\pm$0.9          \\
                                                             & \textbf{OPCM (Ours)}    & \textbf{75.5}$\pm$0.5        & \textbf{71.9}$\pm$0.3        & \textbf{65.7}$\pm$0.2        & \textbf{81.8}$\pm$0.3 & \textbf{77.1}$\pm$0.5 & \textbf{70.3}$\pm$0.2 & \textbf{87.0}$\pm$0.4 & \textbf{83.5}$\pm$0.2 & \textbf{76.0}$\pm$0.2 \\
    \midrule
    \multirow{3}{*}{\rotatebox[origin=c]{90}{BWT$\uparrow$}} & Average (SWA)           & -11.5$\pm$2.2                & -8.0$\pm$1.3                 & -7.1$\pm$2.1                 & -9.7$\pm$1.5          & -7.1$\pm$1.4          & -7.3$\pm$1.7          & -7.3$\pm$1.4          & -5.8$\pm$1.0          & -6.4$\pm$1.5          \\
                                                             & C. Task Arithmetic      & -9.6$\pm$1.5                 & -1.3$\pm$0.3                 & -3.4$\pm$0.4                 & \textbf{-4.2}$\pm$1.0 & -1.3$\pm$0.4          & -3.6$\pm$0.4          & -7.1$\pm$0.8          & -1.8$\pm$0.3          & -3.3$\pm$0.3          \\
                                                             & C. Ties-Merging         & -15.3$\pm$8.0                & \textbf{1.9}$\pm$0.6         & \textbf{-1.5}$\pm$0.7        & -5.5$\pm$0.4          & \textbf{1.4}$\pm$0.7  & \textbf{-1.5}$\pm$1.2 & -13.0$\pm$5.7         & \textbf{1.1}$\pm$0.4  & \textbf{-2.9}$\pm$1.0 \\
                                                             & \textbf{OPCM (Ours)}    & \textbf{-6.3}$\pm$1.1        & -6.0$\pm$1.0                 & -7.8$\pm$1.5                 & -4.8$\pm$0.7          & -5.1$\pm$1.4          & -6.3$\pm$2.2          & \textbf{-2.6}$\pm$1.0 & -4.3$\pm$0.7          & -6.5$\pm$1.8          \\
    \bottomrule
  \end{tabular}
  \vskip -0.5cm
\end{table*}

\subsection{Continual Multi-Task Model Merging}
\label{sec:continual_multi-task_model_merging}

We evaluate our method on three different CLIP-ViT architectures (ViT-B/32, ViT-B/16, and ViT-L/14) across three task sets of increasing size (8, 14, and 20 tasks).
For each experiment, we set $\alpha=0.5$ and repeat the experiment 10 times with shuffled task order and report the mean and standard deviation of the results. We compare our continual merging approach against four baselines:
(1) Continual fine-tuning, (2) Simple Weight Averaging (SWA), which takes the arithmetic mean of model parameters, (3) Continual Task Arithmetic~\citep{ilharcoEditingModelsTask2023}, which performs weighted averaging based on task-specific scaling factors, and (4) Continual Ties-Merging~\citep{yadavResolvingInterferenceWhen2023}.
More details are provided in Appendix~\ref{subsubsec:baseline_methods}.

As shown in Table~\ref{tab:continual_merging}, our method consistently outperforms the baseline methods across all model architectures and task sets.
Our method consistently achieves improvements of 5-8\% over continual Task Arithmetic and continual Ties-Merging.
In addition, the performance gain is maintained as we scale the task set size and model capacity.
What's more, the backward transfer (BWT) results demonstrate our method's ability to retain knowledge of previously learned tasks. While some negative transfer is inevitable in a continual learning setting, our approach shows better BWT scores compared to the baselines in most cases.
The larger ViT-L/14 model exhibits the least forgetting, with a BWT of only -2.6\% on the 8-task set, suggesting that increased model capacity helps mitigate catastrophic forgetting.

\begin{figure}[t]
  \centering
  \includegraphics[width=0.8\linewidth]{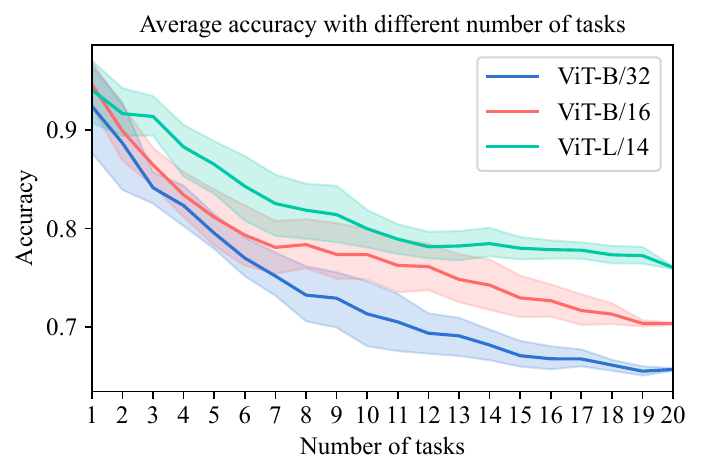}
  \vspace{-0.3cm}
  \caption{Performance comparison of ViT models with different architectures across an increasing number of sequential tasks.}
  \label{fig:vit-tall20}
  \vskip -0.6cm
\end{figure}

In Figure~\ref{fig:vit-tall20}, we show a line plot comparing the average accuracy of three different Vision Transformer (ViT) architectures as they handle an increasing number of tasks from 1 to 20 ($\alpha=0.5$).
For each run, the order of the tasks is shuffled.
Each line represents a different model variant, with shaded areas indicating confidence intervals. The plot demonstrates a general downward trend in accuracy as more tasks are added, with ViT-L/14 maintaining the highest performance throughout. Starting from around 90\% accuracy for all models, there's a gradual decline, with ViT-L/14 maintaining approximately 75-80\% accuracy by task 20, while the smaller models (ViT-B/32 and ViT-B/16) show steeper degradation, dropping to around 65-70\% accuracy.

\textbf{Robustness to task ordering.}
The relatively narrow confidence bands in Figure~\ref{fig:vit-tall20} and small standard deviations in Table~\ref{tab:continual_merging} demonstrate that our method is robust to different task orderings, maintaining consistent overall performance regardless of the sequence in which tasks are presented.
Even as the number of tasks increases to 20, the performance variation remains contained, suggesting that our orthogonal projection mechanism effectively manages task interference regardless of the order in which tasks are merged.
This robustness to task order is a crucial practical advantage of a continual merging technique, as it eliminates the need for careful task sequence engineering in real-world applications.

\textbf{Model size matters in average performance.}
We also observe that model size plays a crucial role in the merging performance.
The ViT-L/14 consistently achieves better results than ViT-B/32 and ViT-B/16, this indicates that larger models may be better suited for continual multi-task merging, possibly due to their increased dimensionality, which helps manage parameter interference by expanding the orthogonal subspace where task-specific updates can exist without affecting other tasks. This property is particularly beneficial when merging multiple task-specific models.

\textbf{Model size matters in mitigating forgetting.}
The results in Table~\ref{tab:continual_merging} and Figure~\ref{fig:vit-tall20} also indicate that model size plays a crucial role in mitigating catastrophic forgetting during continual merging.
While all models show some degree of performance degradation as the number of tasks increases, the rate of decline varies significantly with model capacity.
The ViT-L/14, our largest model, exhibits the most graceful degradation (-17.5\% drops at 20 tasks compared to individuals).
In contrast, the smaller ViT-B/32 and ViT-B/16 models show steeper performance drops (24.1\% and 21.3\% average accuracy drops at 20 tasks, respectively).

\subsection{Hyper-Parameter Analysis}
\label{subsec:hyper-parameter_analysis}

\begin{figure}[t]
  \centering
  \includegraphics[width=0.75\linewidth]{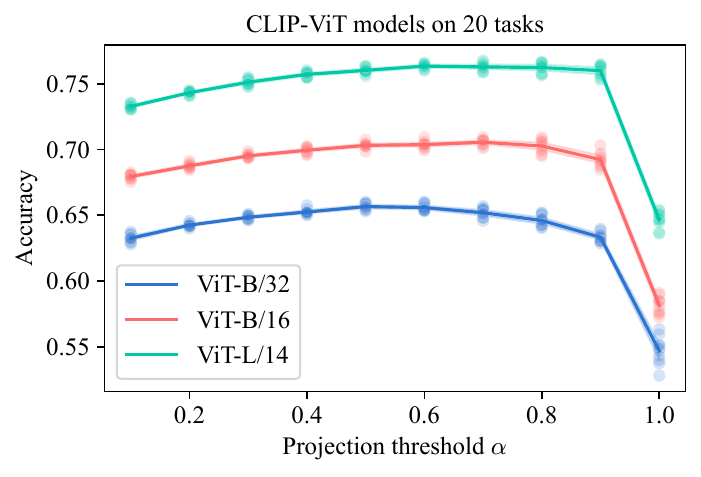}
  \vspace{-0.5cm}
  \caption{Performance comparison of CLIP-ViT models of different sizes across different projection threshold values $\alpha$ on 20 tasks.}
  \label{fig:vit-tall20-alpha}
  \vskip -0.5cm
\end{figure}

\textbf{Ablations on the projection threshold $\alpha$}.
We perform ablation studies on the projection threshold $\alpha$ to validate the robustness of our method to different hyper-parameter settings.
Figure~\ref{fig:vit-b-32-8-14-20-alpha} presents a comparative ablation study between Task Arithmetic (a) and our method (b) on ViT-B/32 across different task scales. Several key findings emerge from this comparison:
(1) \textit{Knowledge retention and new task learning trade-off}. Across all task sets, the performance curves exhibit a consistent inverted U-shape pattern, with optimal performance achieved in the range of $\alpha \approx 0.4-0.6$. This suggests that moderate projection thresholds strike an optimal balance between preserving task-specific information and managing interference between tasks.
(2) \textit{Hyperparameter robustness and transferability}. Our method demonstrates remarkably more stable performance across hyperparameter ranges compared to Task Arithmetic. The optimal $\alpha$ value remains relatively consistent ($\approx$0.5) regardless of the number of tasks, indicating that this hyperparameter is robust and transferable across different task configurations.
This stability and transferability is particularly valuable from a practical perspective especially hyper-parameter tuning is expensive, as it suggests that minimal tuning of $\alpha$ is required when scaling to different numbers of tasks.

In Figure~\ref{fig:vit-tall20-alpha} we also show the performance comparison of CLIP-ViT models of different sizes with varying $\alpha$ on 20 tasks.
It is observed that all three models show stable performance and similar trends as the projection threshold changes.
More details are provided in Appendix~\ref{sec:ablations}.

\begin{figure}[t]
  \centering
  \includegraphics[width=0.75\linewidth]{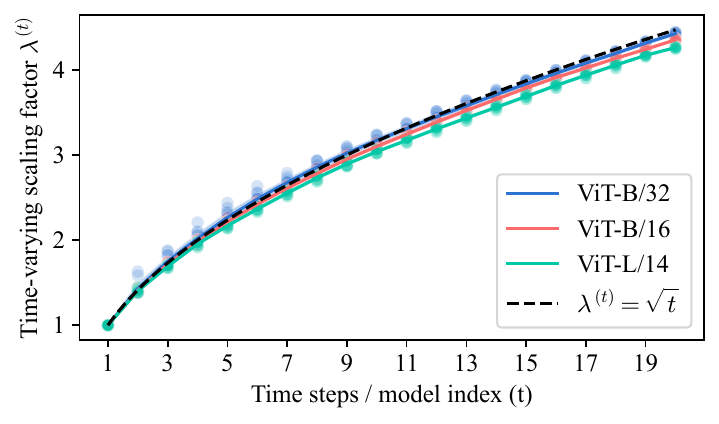}
  \vspace{-0.3cm}
  \caption{
    The adaptive scaling factor $\lambda^{(t)}$ across different steps.
  }
  \label{fig:adaptive_scaling_factor}
  \vskip -0.8cm
\end{figure}

\textbf{Empirical analysis of the adaptive time-varying scaling factor $\lambda^{(t)}$}.
In our method, the scaling factor $\lambda^{(t)}$ is adaptively adjusted based on the norm ratio between the combined task vectors and the average task vector norm. This adaptive mechanism helps maintain stable parameter distances throughout the merging process.
Figure~\ref{fig:adaptive_scaling_factor} shows the adaptive scaling factor $\lambda^{(t)}$ across different steps.
The adaptive scaling factors closely follow a $\sqrt{t}$ curve (shown as the dashed black line), empirically validating our hypothesis of an empirical choice of $\lambda^{(t)} = \sqrt{t}$ in Section~\ref{sec:methodology}.
All three model architectures (ViT-B/32, ViT-B/16, and ViT-L/14) exhibit remarkably similar scaling patterns, indicating that this time-varying adaptive scaling mechanism is also robust and transferable across different model capacities.

\section{Conclusion and Future Work}
\label{sec:discussion_and_conclusion}

In this work, we presented a novel training-free projection-based continual model merging method for combining multiple fine-tuned models \textit{sequentially}.
Our approach addresses several key challenges in model merging through orthogonal projections of weight matrices and adaptive scaling mechanisms.
Through extensive experiments, we demonstrated that our method consistently outperforms baseline approaches. 
The results show that our approach achieves 5-8\% average accuracy improvement while maintaining robust performance across different task orderings.
The effectiveness of our method scales well with model capacity, with larger models showing a better ability to mitigate catastrophic forgetting during sequential merging.

While our current work focuses on vision models, the proposed method's principles are generally applicable to other domains. Future work could explore extensions to language models, multi-modal architectures, and other scenarios where sequential model merging is beneficial. 
Additionally, investigating the relationship between model capacity and merging performance could provide valuable insights.


\FloatBarrier
\bibliography{references}
\bibliographystyle{icml2025}

\newpage
\appendix
\onecolumn

The appendix is divided into several sections, each giving extra information and details.

\startcontents[sections]  
\printcontents[sections]{}{1}{\setcounter{tocdepth}{3}}  
\vskip 0.2in
\hrule

\section{Proofs}
\label{sec:proof}

In this section, we provide detailed proofs for the theorems and corollaries presented in the main text.

First, we prove the general term formula of the merged model, which demonstrates that the merged model can be expressed as a weighted combination of all previous task-specific models. This formula shows how each task's contribution is preserved through its projection operator and serves as the foundation for understanding the key properties of our continual merging approach.

\begin{theorem}[General Term Formula]
  \label{thm:general_term_formula}
  Given the update rule of our method as defined in Eq.(\ref{eq:linear_weight_update_rule}), the general term formula of the merged model at time step $t$ can be expressed as:
  \begin{equation}
    \label{eq:general_term_formula}
    W_{\text{merged}}^{(t)} = W^{(0)} + \frac{1}{\lambda^{(t)}} \sum_{i=1}^{t} \mathcal{P}^{(i-1)}_{\alpha} \left(\Delta W^{(i)}\right).
  \end{equation}
\end{theorem}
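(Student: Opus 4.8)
The plan is to proceed by induction on the time step $t$, unwinding the recursive update rule in Eq.(\ref{eq:linear_weight_update_rule}) so that the scaling factors from all intermediate steps telescope away, leaving only $\lambda^{(t)}$ in the denominator. First I would fix notation, recalling that $\Delta W_{\text{merged}}^{(t-1)} = W_{\text{merged}}^{(t-1)} - W^{(0)}$, and adopt the convention that the incoming first task vector is not projected, i.e.\ $\mathcal{P}^{(0)}_{\alpha}$ is the identity map.

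For the base case $t=1$, the algorithm initializes $W_{\text{merged}}^{(1)} = W^{(1)} = W^{(0)} + \Delta W^{(1)}$ with $\lambda^{(1)} = 1$, so Eq.(\ref{eq:general_term_formula}) reduces to $W^{(0)} + \mathcal{P}^{(0)}_{\alpha}(\Delta W^{(1)}) = W^{(0)} + \Delta W^{(1)}$, which holds under the stated convention. (Equivalently, one may begin the induction at $t=2$, since that is where the update rule first genuinely applies.)

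For the inductive step, I assume the formula holds at step $t-1$, namely $W_{\text{merged}}^{(t-1)} = W^{(0)} + \frac{1}{\lambda^{(t-1)}}\sum_{i=1}^{t-1}\mathcal{P}^{(i-1)}_{\alpha}(\Delta W^{(i)})$. Subtracting $W^{(0)}$ and clearing the denominator yields the key identity
\[
  \lambda^{(t-1)}\,\Delta W_{\text{merged}}^{(t-1)} = \sum_{i=1}^{t-1}\mathcal{P}^{(i-1)}_{\alpha}\!\left(\Delta W^{(i)}\right).
\]
Substituting this directly into the numerator of Eq.(\ref{eq:linear_weight_update_rule}) collapses the weighted previous-merged term into the running sum, and appending the newly projected vector $\mathcal{P}^{(t-1)}_{\alpha}(\Delta W^{(t)})$ extends the sum to index $t$, producing exactly $W^{(0)} + \frac{1}{\lambda^{(t)}}\sum_{i=1}^{t}\mathcal{P}^{(i-1)}_{\alpha}(\Delta W^{(i)})$, as claimed.

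The heart of the argument, and the only place that requires care, is the exact cancellation between the factor $\lambda^{(t-1)}$ appearing in the update rule and the $1/\lambda^{(t-1)}$ carried by the inductive hypothesis; this is what lets every previous scaling factor drop out so that only $\lambda^{(t)}$ survives in the final denominator. Beyond verifying this cancellation, the proof is purely algebraic bookkeeping, so I do not expect a substantive obstacle. I would close by noting that the identical derivation applies verbatim to the non-projected parameters (taking $\mathcal{P}^{(t-1)}_{\alpha}$ to be the identity), which recovers the Task-Arithmetic-like degenerate form $p_{\text{merged}}^{(t)} = p^{(0)} + \frac{1}{\lambda^{(t)}}\sum_{i=1}^{t}\Delta p^{(i)}$ stated in the main text.
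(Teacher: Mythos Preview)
Your proposal is correct and follows essentially the same approach as the paper: induction on $t$ with the convention that $\mathcal{P}^{(0)}_{\alpha}$ is the identity and $\lambda^{(1)}=1$, with the inductive step hinging on the cancellation of $\lambda^{(t-1)}$ against the $1/\lambda^{(t-1)}$ from the hypothesis. The paper additionally spells out the $t=2$ case explicitly, but this is cosmetic and your argument already covers it.
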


\begin{proof}
  We prove the general term formula by mathematical induction.
  For the base case $t = 1$, we have
  $
    W_{\text{merged}}^{(1)} = W^{(1)} = W^{(0)} + \frac{\mathcal{P}^{(0)}_{\alpha}(\Delta W^{(1)})}{\lambda^{(1)}},
  $
  where $\mathcal{P}^{(0)}_{\alpha}$ is the identity mapping and $\lambda^{(1)} = 1$.
  For $t = 2$, applying the update rule yields:
  \begin{align}
    W_{\text{merged}}^{(2)}
     & = W^{(0)} + \frac{\lambda^{(1)} \Delta W_{\text{merged}}^{(1)} + \mathcal{P}^{(1)}_{\alpha} \left(\Delta W^{(2)}\right)}{\lambda^{(2)}},                                                        \\
     & = W^{(0)} + \frac{\lambda^{(1)} \frac{\mathcal{P}^{(0)}_{\alpha}\left(\Delta W^{(1)}\right)}{\lambda^{(1)}} + \mathcal{P}^{(1)}_{\alpha} \left(\Delta W^{(2)}\right)}{\lambda^{(2)}},           \\
     & = W^{(0)} + \underbrace{\frac{\mathcal{P}^{(0)}_{\alpha}\left(\Delta W^{(1)}\right) + \mathcal{P}^{(1)}_{\alpha} \left(\Delta W^{(2)}\right)}{\lambda^{(2)}}}_{\Delta W_{\text{merged}}^{(2)}}.
  \end{align}
  Thus, the formula holds for $t \leq 2$. For the inductive step, assume the formula holds for $t = k - 1$. Then for $t = k$:
  {\allowdisplaybreaks
  \begin{align}
    W_{\text{merged}}^{(k)}
     & = W^{(0)} + \frac{\lambda^{(k-1)} \Delta W_{\text{merged}}^{(k-1)} + \mathcal{P}^{(k-1)}_{\alpha} \left(\Delta W^{(k)}\right)}{\lambda^{(k)}},                                                                                   \\
     & = W^{(0)} + \frac{\lambda^{(k-1)} \left( \frac{1}{\lambda^{(k-1)}} \sum_{i=1}^{k-1} \mathcal{P}^{(i-1)}_{\alpha} \left(\Delta W^{(i)}\right) \right) + \mathcal{P}^{(k-1)}_{\alpha} \left(\Delta W^{(k)}\right)}{\lambda^{(k)}}, \\
     & = W^{(0)} + \frac{1}{\lambda^{(k)}} \sum_{i=1}^{k} \mathcal{P}^{(i-1)}_{\alpha} \left(\Delta W^{(i)}\right).
  \end{align}
  }
  By the principle of mathematical induction, the formula holds for all $t \geq 1$.
\end{proof}

A fundamental property of our method is the orthogonality between the projected task vectors and the previously merged model.
This orthogonality property is essential as it guarantees that new task vectors do not interfere with previously acquired knowledge, thereby preventing catastrophic forgetting.

\begin{proof}[Proof of Theorem~\ref{thm:orthogonality}]
  By definition of the projection mapping $\mathcal{P}^{(t-1)}_{\alpha}$, we have:
  {\allowdisplaybreaks
  \begin{align}
    {\left\langle \mathcal{P}^{(t-1)}_{\alpha}(\Delta W^{(t)}), \Delta W_{\text{merged}}^{(t-1)} \right\rangle}_F
     & = {\left\langle \sum_{i,j=r_{\alpha}, i\neq j}^{m, n} {\left\langle \Delta W^{(t)}, u_i v_j^T \right\rangle}_F u_i v_j^T, U^{(t-1)} \Sigma^{(t-1)} V^{(t-1)T} \right\rangle}_F, \\
     & = \sum_{i,j=r_{\alpha}, i\neq j}^{m, n} {\left\langle \Delta W^{(t)}, u_i v_j^T \right\rangle}_F {\left\langle u_i v_j^T, U^{(t-1)} \Sigma^{(t-1)} V^{(t-1)T} \right\rangle}_F.
  \end{align}
  }
  Since $U^{(t-1)} \Sigma^{(t-1)} V^{(t-1)T}$ represents the SVD of $\Delta W_{\text{merged}}^{(t-1)}$, where the columns of $U^{(t-1)}$ and $V^{(t-1)}$ form orthonormal bases, for any pair $(i,j)$ where $i \neq j$, we have ${\left\langle u_i v_j^T, U^{(t-1)} \Sigma^{(t-1)} V^{(t-1)T} \right\rangle}_F = 0$ due to the orthogonality of singular vectors. Therefore:
  \begin{equation}
    {\left\langle \mathcal{P}^{(t-1)}_{\alpha}(\Delta W^{(t)}), \Delta W_{\text{merged}}^{(t-1)} \right\rangle}_F = 0.
  \end{equation}
\end{proof}

Finally, we prove the bounded distance theorem, which establishes that the distance between the merged model and the base model is bounded by the maximum distance between any task-specific model and the base model. This property provides a theoretical guarantee that the merged model remains within a controlled region around the base model, preventing catastrophic drift while allowing for effective adaptation to new tasks.

\begin{proof}[Proof of Theorem~\ref{thm:bounded_distance}]
  We proceed by induction on $t$. For the base case $t=1$, we have $W_{\text{merged}}^{(1)} - W^{(0)} = \Delta W^{(1)}$, so the bound holds trivially.

  Assume the bound holds for $t-1$. For step $t$, using Eq.(\ref{eq:linear_weight_update_rule}) and the orthogonality property of the projection operator:
  \begin{align}
    \left\|W_{\text{merged}}^{(t)} - W^{(0)}\right\|_F^2
     & = \left\|\frac{\sqrt{t-1} \Delta W_{\text{merged}}^{(t-1)} + \mathcal{P}^{(t-1)}_{\alpha}(\Delta W^{(t)})}{\sqrt{t}}\right\|_F^2,                \\
     & = \frac{t-1}{t}\left\|\Delta W_{\text{merged}}^{(t-1)}\right\|_F^2 + \frac{1}{t}\left\|\mathcal{P}^{(t-1)}_{\alpha}(\Delta W^{(t)})\right\|_F^2.
  \end{align}

  By the properties of the projection operator, we know that $\left\|\mathcal{P}^{(t-1)}_{\alpha}\left(\Delta W^{(t)}\right)\right\|_F^2 \leq \left\|\Delta W^{(t)}\right\|_F^2$.
  Applying the induction hypothesis:
  \begin{align}
    \left\|W_{\text{merged}}^{(t)} - W^{(0)}\right\|_F^2
     & \leq \frac{t-1}{t} \max_{i \in [1,t-1]} \left\|\Delta W^{(i)}\right\|_F^2 + \frac{1}{t}\left\|\Delta W^{(t)}\right\|_F^2, \\
     & \leq \max_{i \in [1,t]} \left\|\Delta W^{(i)}\right\|_F^2.
  \end{align}
  This completes the induction, proving the bound holds for all $t \geq 1$.
\end{proof}

\section{Details of the Fine-Tuned Models}
\label{sec:fine-tuned_models}

In this section, we present the experimental details of model fine-tuning and evaluate the performance of the pre-trained model (zero-shot test accuracy) and our fine-tuned models on the test set of each downstream task.
For each downstream task, we fine-tuned the visual encoder of the pre-trained CLIP-ViT models using task-specific training data, the classification heads are initialized using the pre-trained text encoder and fixed throughout the fine-tuning process.
We employed a standard fine-tuning protocol with cross-entropy loss and the Adam optimizer, using a cosine annealing learning rate schedule with a maximum learning rate of 1e-5 and batch size 128, and train for 4000 steps~\footnote{The models are fine-tuned and evaluated using the FusionBench~\citep{tangFusionBenchComprehensiveBenchmark2024}.}.

\begin{table}[tb]
  \centering
  \caption{Test set accuracy of the pre-trained model and individual fine-tuned models on different downstream tasks.}
  \label{tab:single_model_vit}
  \vskip 0.15in
  \setlength{\tabcolsep}{1.5pt}
  \begin{small}
    \begin{tabular}{ccccccccccc}
      \toprule
      Model       & SUN397  & Cars          & RESISC45 & EuroSAT  & SVHN    & GTSRB   & MNIST        & DTD    & Flowers102 & PCAM         \\
      \midrule
      \multicolumn{11}{c}{\textit{CLIP-ViT-B/32}}                                                                                         \\
      Pre-trained & 63.2    & 59.6          & 60.3     & 45.0     & 31.6    & 32.5    & 48.3         & 44.2   & 66.4       & 60.6         \\
      Fine-tuned  & 74.9    & 78.5          & 95.1     & 99.1     & 97.3    & 98.9    & 99.6         & 79.7   & 88.6       & 88.0         \\
      \multicolumn{11}{c}{\textit{CLIP-ViT-B/16}}                                                                                         \\
      Pre-trained & 65.5    & 64.7          & 66.4     & 54.1     & 52.0    & 43.5    & 51.7         & 45.0   & 71.3       & 54.0         \\
      Fine-tuned  & 78.9    & 85.9          & 96.6     & 99.0     & 97.6    & 99.0    & 99.7         & 82.3   & 94.9       & 90.6         \\
      \multicolumn{11}{c}{\textit{CLIP-ViT-L/14}}                                                                                         \\
      Pre-trained & 68.2    & 77.9          & 71.3     & 61.2     & 58.4    & 50.5    & 76.3         & 55.5   & 79.2       & 51.2         \\
      Fine-tuned  & 82.8    & 92.8          & 97.4     & 99.1     & 97.9    & 99.2    & 99.8         & 85.5   & 97.7       & 91.1         \\
      \bottomrule
      \toprule
      Model       & FER2013 & OxfordIIITPet & STL10    & CIFAR100 & CIFAR10 & Food101 & FashionMNIST & EMNIST & KMNIST     & RenderedSST2 \\
      \midrule
      \multicolumn{11}{c}{\textit{CLIP-ViT-B/32}}                                                                                         \\
      Pre-trained & 41.3    & 83.3          & 97.1     & 63.7     & 89.8    & 82.4    & 63.0         & 12.0   & 10.0       & 58.6         \\
      Fine-tuned  & 71.6    & 92.5          & 97.5     & 88.4     & 97.6    & 88.4    & 94.7         & 95.6   & 98.2       & 71.3         \\
      \multicolumn{11}{c}{\textit{CLIP-ViT-B/16}}                                                                                         \\
      Pre-trained & 46.4    & 88.4          & 98.3     & 66.3     & 90.8    & 87.0    & 67.3         & 12.4   & 11.2       & 60.6         \\
      Fine-tuned  & 72.8    & 94.5          & 98.2     & 88.8     & 98.3    & 91.9    & 94.5         & 95.3   & 98.1       & 75.7         \\
      \multicolumn{11}{c}{\textit{CLIP-ViT-L/14}}                                                                                         \\
      Pre-trained & 50.0    & 93.2          & 99.4     & 75.1     & 95.6    & 91.2    & 67.0         & 12.3   & 9.7        & 68.9         \\
      Fine-tuned  & 75.9    & 95.7          & 99.2     & 93.0     & 99.1    & 94.8    & 95.3         & 95.4   & 98.3       & 80.5         \\
      \bottomrule
    \end{tabular}
  \end{small}
\end{table}

\begin{figure}[tb]
  \centering
  \begin{subfigure}[b]{0.3\textwidth}
    \centering
    \includegraphics[height=3.8cm]{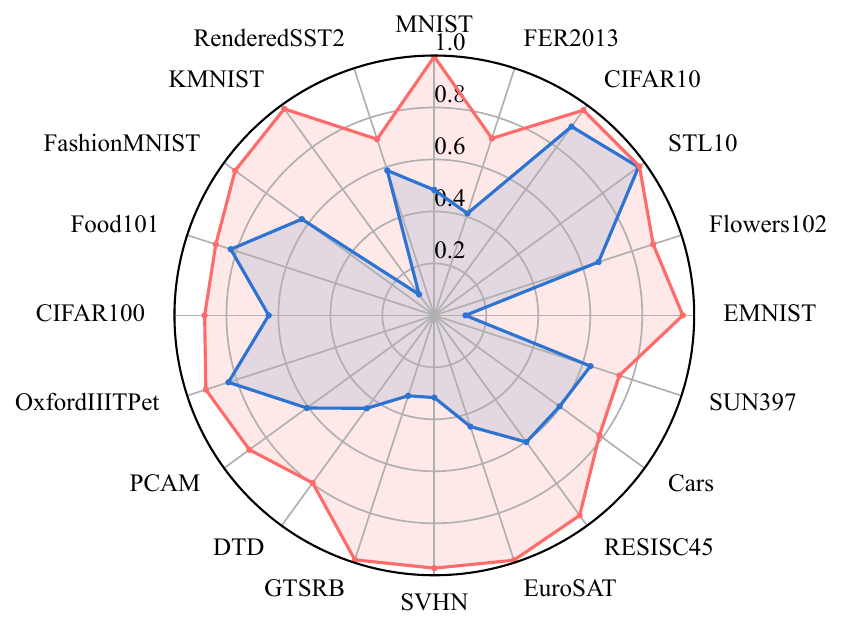}
    \caption{ViT-B/32}
    \label{fig:single_model_vit_b32}
  \end{subfigure}
  \hfill
  \begin{subfigure}[b]{0.3\textwidth}
    \centering
    \includegraphics[height=3.8cm]{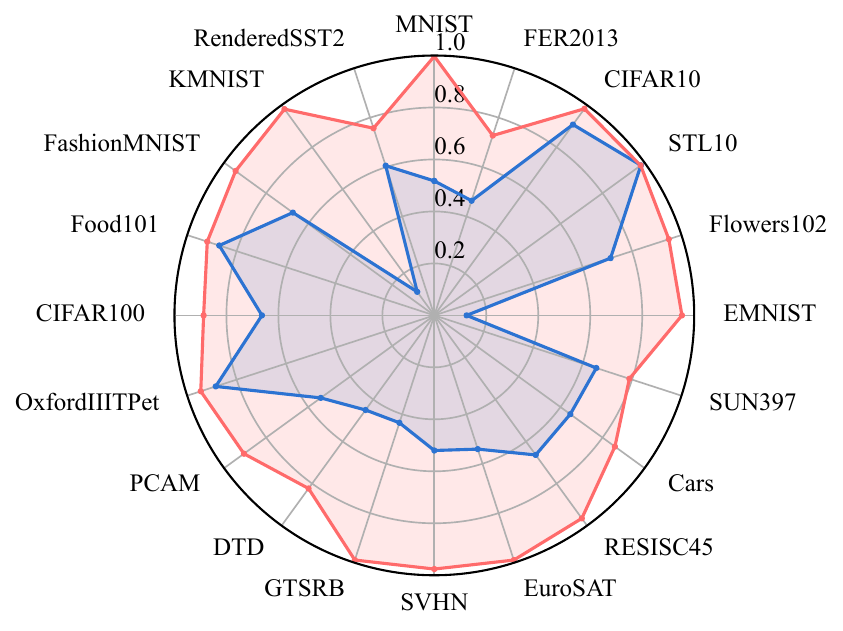}
    \caption{ViT-B/16}
    \label{fig:single_model_vit_b16}
  \end{subfigure}
  \hfill
  \begin{subfigure}[b]{0.39\textwidth}
    \centering
    \includegraphics[height=3.8cm]{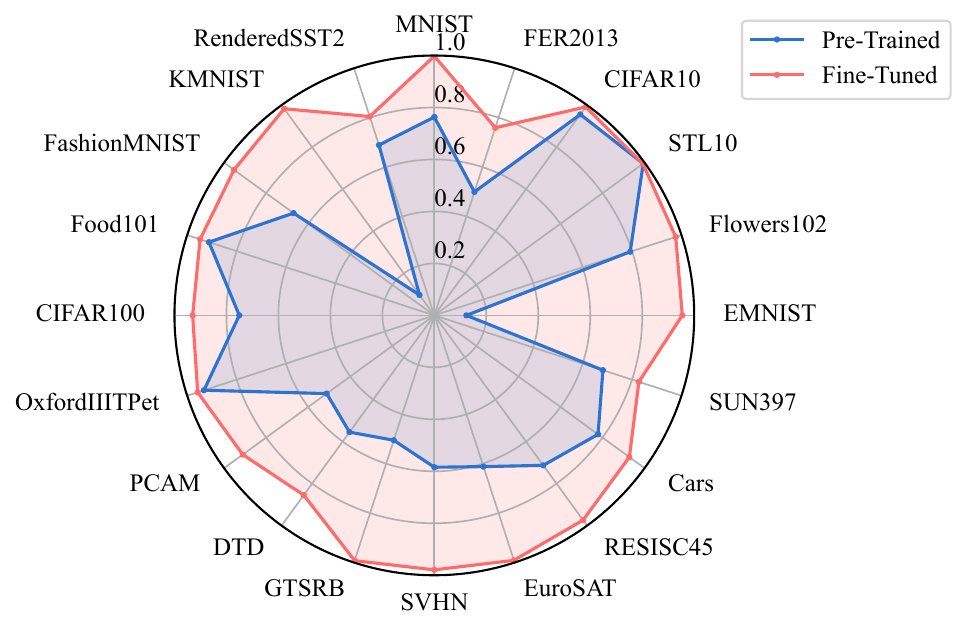}
    \caption{ViT-L/14}
    \label{fig:single_model_vit_l14}
  \end{subfigure}
  \caption{Comparison of test set accuracy between the pre-trained model and individual fine-tuned models on different downstream tasks.}
  \label{fig:single_model_vit}
  \vskip -0.5cm
\end{figure}

\textbf{Models and datasets}.
We adopt the same experimental setup as described in~\citep{wang2024localizing}, where we fine-tune three different sizes of CLIP-ViT models (
\href{https://huggingface.co/openai/clip-vit-base-patch32}{ViT-B/32},
\href{https://huggingface.co/openai/clip-vit-base-patch16}{ViT-B/16},
and \href{https://huggingface.co/openai/clip-vit-large-patch14}{ViT-L/14}
) on 20 downstream image classification tasks.
All the models and datasets are publicly available on Hugging Face~\citep{wolfHuggingFaceTransformersStateoftheart2020}.
The 20 downstream tasks are SUN397~\citep{xiaoSUNDatabaseLargescale2010}, Cars~\citep{krause3DObjectRepresentations2013}, RESISC45~\citep{chengRemoteSensingImage2017}, EuroSAT~\citep{helberEuroSATNovelDataset2019}, SVHN~\citep{netzerReadingDigitsNatural2011}, GTSRB~\citep{stallkampManVsComputer2012}, MNIST~\citep{lecunGradientbasedLearningApplied1998}, DTD~\citep{cimpoiDescribingTexturesWild2014}, Flowers102~\citep{nilsback2008automated}, PCAM~\citep{veeling2018rotation}, FER2013~\citep{goodfellow2013challenges}, OxfordIIITPet~\citep{parkhi2012cats}, STL10~\citep{coates2011analysis}, CIFAR100, CIFAR10~\citep{krizhevsky2009learning}, Food101~\citep{bossard2014food}, FashionMNIST~\citep{xiao2017fashion}, EMNIST~\citep{cohen2017emnist}, KMNIST~\citep{clanuwat2018deep}, and RenderedSST2~\citep{socher2013recursive,radford2021learning}.

In the fine-tuning process, only the vision encoder was adjusted, while the text encoder remained unchanged.
The pre-trained text encoder was used to generate (zero-shot)task-specific classification heads, and the classification heads were also fixed throughout the fine-tuning process. This means that both the pre-trained and fine-tuned models utilize the same classification heads for a given task.
This approach preserves the open-vocabulary capability of the model, allowing it to remain applicable to any downstream task. Additionally, freezing the classification head does not compromise accuracy as shown in~\citep{ilharco2022patching}.

Figure~\ref{fig:single_model_vit} and Table~\ref{tab:single_model_vit} demonstrate the performance comparison between pre-trained and fine-tuned models across 20 downstream tasks.
As illustrated, fine-tuning consistently improves model performance across all architectures (ViT-B/16, ViT-B/32, and ViT-L/14), with particularly notable gains in task-specific accuracy for most tasks.

\begin{figure}[tb]
  \centering
  \includegraphics[width=0.7\textwidth]{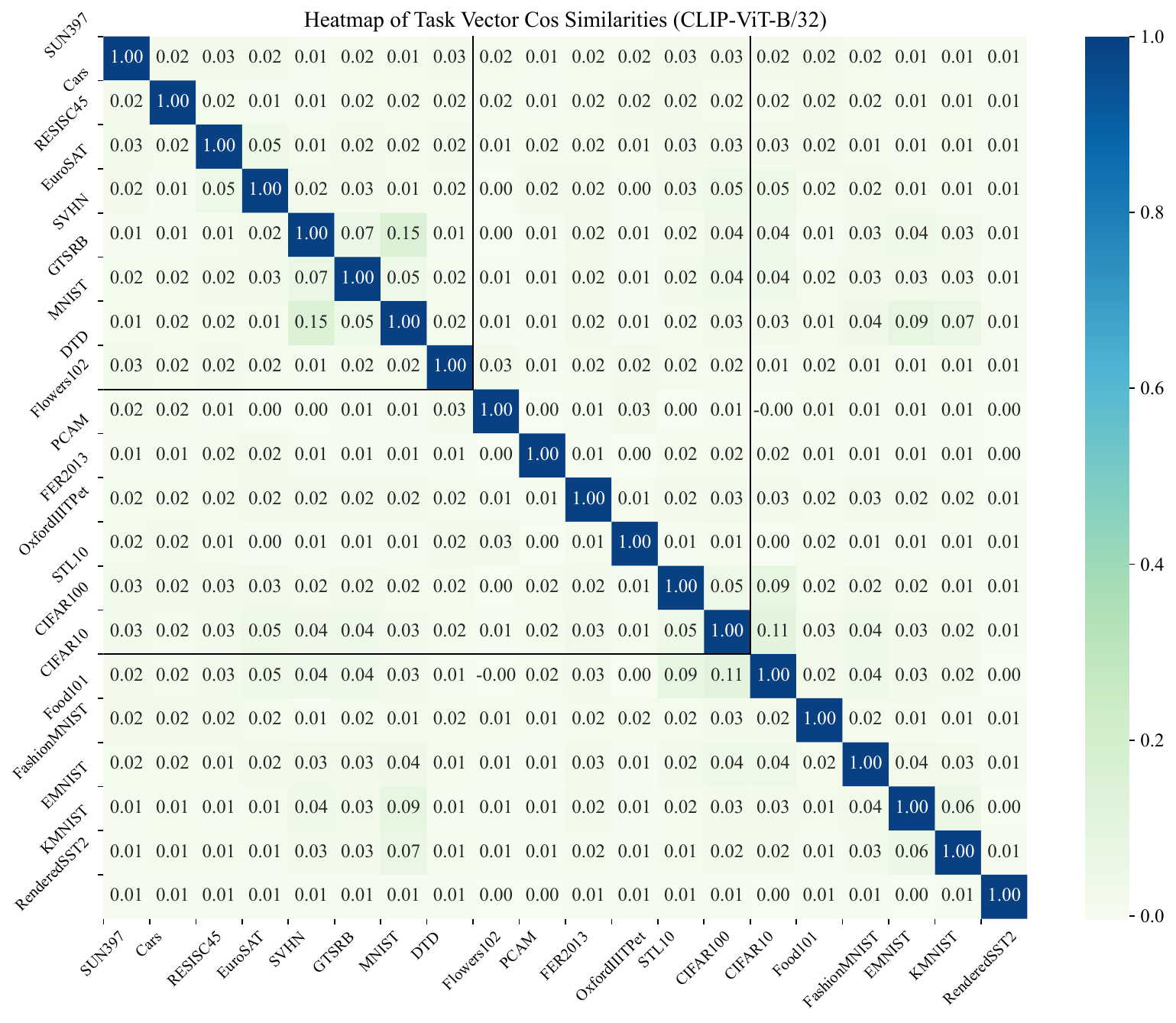}
  \caption{Cosine similarity between task vectors of CLIP-ViT-B/32 models fine-tuned on different downstream tasks.}
  \label{fig:clip-vit-b-32-cos-similarity}
\end{figure}

\begin{figure}[tb]
  \centering
  \includegraphics[width=0.7\textwidth]{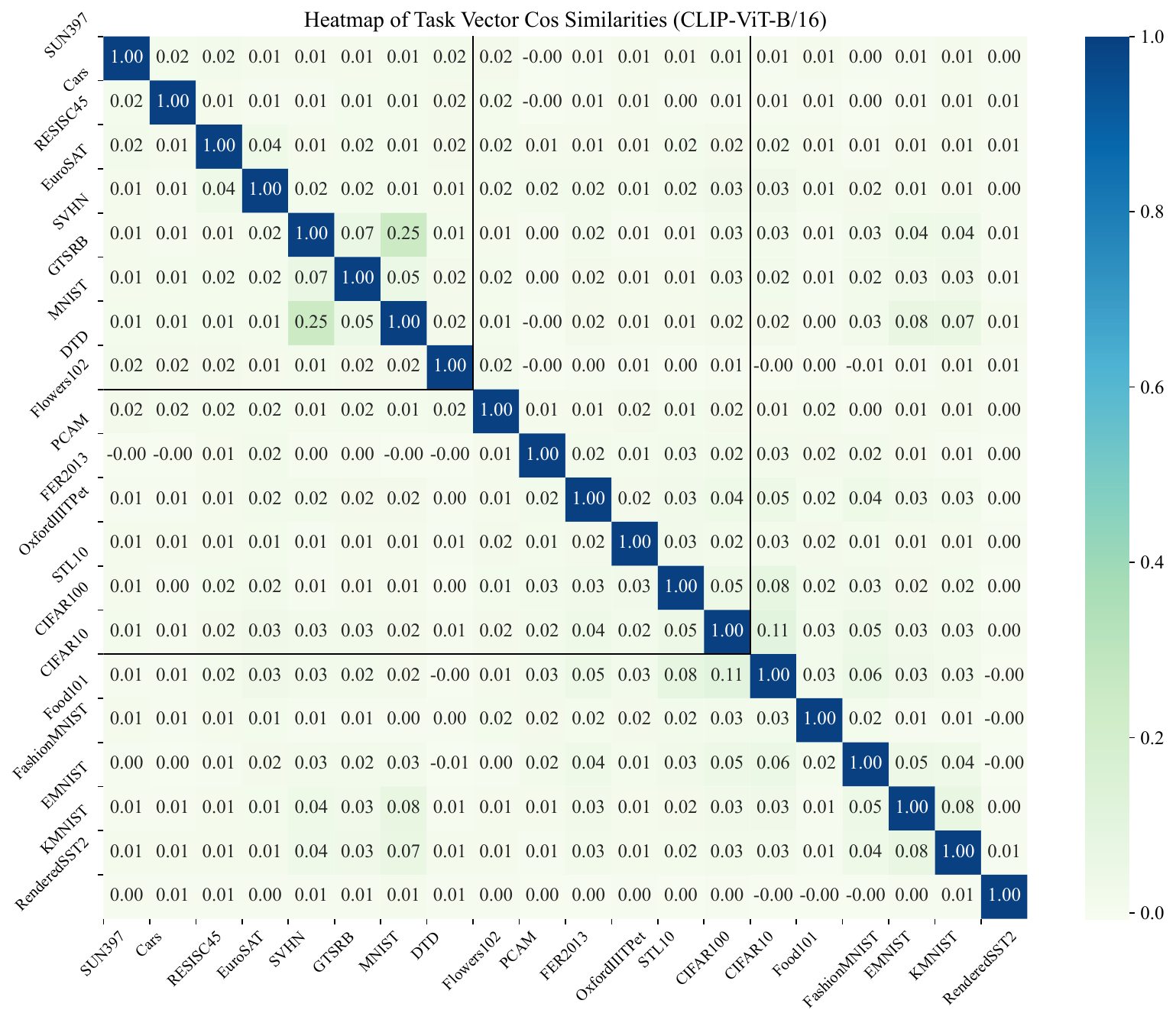}
  \caption{Cosine similarity between task vectors of CLIP-ViT-B/16 models fine-tuned on different downstream tasks.}
  \label{fig:clip-vit-b-16-cos-similarity}
\end{figure}

\begin{figure}[tb]
  \centering
  \includegraphics[width=0.7\textwidth]{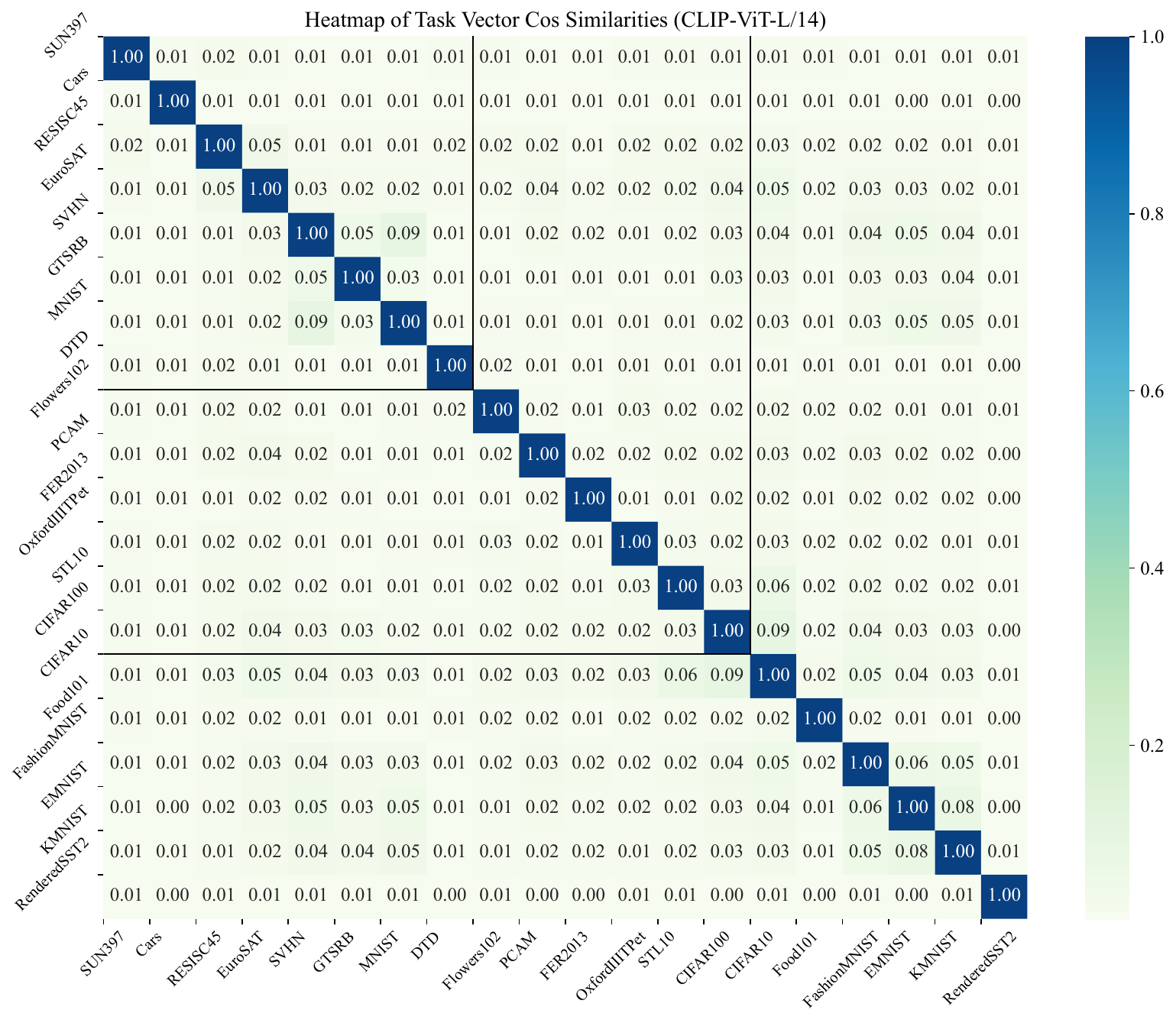}
  \caption{Cosine similarity between task vectors of CLIP-ViT-L/14 models fine-tuned on different downstream tasks.}
  \label{fig:clip-vit-l-14-cos-similarity}
\end{figure}

Figures~\ref{fig:clip-vit-b-32-cos-similarity}, \ref{fig:clip-vit-b-16-cos-similarity}, and \ref{fig:clip-vit-l-14-cos-similarity} present the cosine similarity matrices between task vectors for CLIP-ViT models of different sizes (B/32, B/16, and L/14) fine-tuned on 20 downstream tasks.
Several key observations emerge from these visualizations:
(1) Most off-diagonal elements show very low cosine similarity values (0.01-0.05), indicating that the majority of task vectors are largely orthogonal to each other.
(2) The similarity patterns remain largely consistent across different model architectures, though ViT-L/14 shows slightly lower similarity values overall, suggesting that larger models may learn more specialized representations for each task.
(3) Some groups of related tasks show slightly elevated similarity values, indicating that the tasks are more similar to each other. For example,
\begin{itemize}[topsep=0cm, parsep=0cm, itemsep=0cm]
  \item Digit recognition tasks (SVHN,MNIST, EMNIST, KMNIST) exhibit higher mutual similarities.
  \item Natural image classification tasks (CIFAR10, CIFAR100) show moderate correlations.
\end{itemize}

In most of the experiments, we group the tasks into 8, 14, and 20 tasks, and report the average accuracy (ACC) and backward transfer (BWT) of the merged models.
The task groups are as follows:
\begin{itemize}[topsep=0cm, parsep=0cm, itemsep=0cm]
  \item The 8 tasks: SUN397, Cars, RESISC45, EuroSAT, SVHN, GTSRB, MNIST, and DTD
  \item The 14 tasks: the 8 tasks together with Flowers102, PCAM, FER2013, OxfordIIITPet, STL10, and CIFAR100.
  \item The 20 tasks: the 14 tasks together with CIFAR10, Food101, FashionMNIST, EMNIST, KMNIST, and RenderedSST2.
\end{itemize}

\section{Additional Details on Experiments}
\label{sec:experiments_details}

\subsection{Continual Multi-Task Model Merging}
\label{subsec:continual_multi-task_model_merging}

For our continual multi-task model merging experiments, we evaluated three different CLIP-ViT architectures (ViT-B/32, ViT-B/16, and ViT-L/14) on task sets of increasing size (8, 14, and 20 tasks). Each experiment was repeated 10 times with randomly shuffled task orders to ensure robust evaluation.
The models were initialized using pre-trained CLIP checkpoints and subsequently fine-tuned separately for each task before being merged. During fine-tuning, only the vision encoder was updated, while the text encoder remained frozen and was employed to generate task-specific classification heads. For further details on the fine-tuning process, please see Appendix~\ref{sec:fine-tuned_models}.

\subsubsection{Baseline Methods}
\label{subsubsec:baseline_methods}

\textbf{Continual Fine-tuning} is a common baseline method for continual learning. For a sequence of tasks associate with training datasets $\{\mathcal{D}_1, \mathcal{D}_2, \dots, \mathcal{D}_t\}$, the model is fine-tuned on each task sequentially.
Denote the model parameters after fine-tuning on the $i$-th task as $\theta^{(i)}$.
The update rule of continual fine-tuning is given by:
\begin{equation}
  \label{eq:continual_fine-tuning}
  \theta^{(t)} = \text{Fine-tune}(\theta^{(t-1)}, \mathcal{D}_t),
\end{equation}
where $\text{Fine-tune}$ is the fine-tuning process.
For each task, we use the same cosine annealing learning rate schedule with a maximum learning rate of 1e-5 and batch size 128, and train for 4000 steps.
For continual fine-tuning, we set the random seed to 42 and shuffle the task order.

\textbf{Stochastic weight averaging (SWA)} is a common technique to stabilize the training process and improve generalization in model training~\citep{izmailovAveragingWeightsLeads2019}.
The update rule of SWA is given by:
\begin{equation}
  \label{eq:swa}
  \theta_{\text{SWA}}^{(1)} = \theta^{(1)}, \quad \theta_{\text{SWA}}^{(t)} = \frac{\theta_{\text{SWA}}^{(t-1)}(t-1) + \theta^{(t)}}{t},
\end{equation}
where $\theta^{(t)}$ is the model parameters at step $t$, and $\theta_{\text{SWA}}^{(t)}$ is the averaged model parameters at step $t$.
Expanding the above equation, we have:
\begin{equation}
  \label{eq:swa_expand}
  \theta_{\text{SWA}}^{(t)} = \frac{1}{t} \sum_{i=1}^{t} \theta^{(i)}.
\end{equation}
Therefore, this method is equivalent to averaging the parameters of all the checkpoints, which is also known as modelsoups in the literature~\citep{wortsman2022model,zimmer2023sparse,prabhakar2024lora}.

\textbf{Task Arithmetic} is a simple yet powerful technique for merging multiple task-specific models into a unified multi-task model, as demonstrated in recent studies~\citep{ilharcoEditingModelsTask2023,ortiz2024task}.
This method linearly combines the parameters of task-specific fine-tuned models with those of the shared pre-trained base model, allowing the merged model to handle multiple tasks effectively at the same time.
Mathematically, task arithmetic can be formalized as follows. Given a pre-trained model with parameters $\theta^{(0)}$ and a set of $T$ task-specific models fine-tuned from the pre-trained model with parameters $\theta^{(i)}$ for $i = 1, 2, \dots, T$, the parameters of the merged model $\theta_{\text{merged}}$ are computed as:
\begin{equation}
  \label{eq:task_arithmetic}
  \theta_{\text{merged}} = \theta^{(0)} + \lambda \sum_{i=1}^{T} \left(\theta^{(i)} - \theta^{(0)}\right),
\end{equation}
where $\lambda$ is a scaling factor that is usually selected on a validation set using a grid search.
In our experiments, we choose the value of $\lambda$ that achieves the highest average accuracy from the set $\{0.1, 0.2, 0.3, 0.4, 0.5, 0.6, 0.7, 0.8, 0.9, 1.0\}$. The experimental results show that $\lambda=0.3$ is the optimal value for eight image classification tasks, and $\lambda=0.1$ is the optimal value for 14 and 20 tasks.

In our continual multi-task model merging experiments, we use the following update rule of the merged model at step $t$:
\begin{equation}
  \label{eq:continual_task_arithmetic}
  \theta_{\text{merged}}^{(0)} = \theta^{(0)}, \quad \theta_{\text{merged}}^{(t)} = \theta_{\text{merged}}^{(t-1)} + \lambda \left(\theta^{(t)} - \theta^{(0)}\right),
\end{equation}
where $\theta^{(t)}$ is the model parameters at step $t$, and $\theta_{\text{merged}}^{(t)}$ is the merged model parameters at step $t$.

\textbf{Ties-Merging} further extends the task arithmetic method by  incorporating a systematic approach to handle parameter redundancy and sign conflicts during model merging~\citep{yadavResolvingInterferenceWhen2023}.
This method ensures that the merged model retains the most relevant and consistent information from the individual task-specific models, thereby improving overall performance and robustness. By trimming redundant parameters, electing the most significant sign for each parameter, and performing a disjoint merge, Ties-Merging effectively reduces interference between tasks, leading to more stable and accurate results.
Mathematically, Ties-Merging can be formalized as follows:
\begin{align}
  \label{eq:ties_merging}
  \left\{\tau_{\text{Ties}}^{(i)}\right\}_{i=1}^{T} & = \text{TiesMerging}\left(\left\{\tau^{(i)}\right\}_{i=1}^{T}\right), \\
  \theta_{\text{merged}}                            & = \theta^{(0)} + \lambda \sum_{i=1}^{T} \tau_{\text{Ties}}^{(i)},
\end{align}
where $\theta^{(0)}$ is the pre-trained model parameters, $\tau^{(i)} = \theta^{(i)} - \theta^{(0)}$ is the task vector for the $i$-th task, and $\tau_{\text{Ties}}^{(i)}$ is the trimmed and selected task vector for the $i$-th task. $\lambda$ is a scaling factor that is similar to the one used in Task Arithmetic.
We use the same scaling factor $\lambda$ as in Task Arithmetic.

In our continual multi-task model merging experiments, we use the following update rule of the merged model at step $t$:
\begin{equation}
  \label{eq:continual_ties_merging}
  \theta_{\text{merged}}^{(1)} = \theta^{(0)} + \lambda (\theta^{(1)} - \theta^{(0)}), \quad \theta_{\text{merged}}^{(t)} = \theta^{(0)} + \lambda \left(\tau_{\text{Ties-merged}}^{(t-1)} + \tau_{\text{Ties}}^{(t)}\right),
\end{equation}
where $\left\{\tau_{\text{Ties-merged}}^{(t-1)}, \theta_{\text{Ties}}^{(t)}\right\} = \text{TiesMerging}\left(\left\{\theta_{\text{merged}}^{(t-1)} - \theta^{(0)}, \theta^{(t)} - \theta^{(0)}\right\}\right)$.

\subsubsection{Experimental Results}
\label{subsubsec:continual_merging_experimental_results}

\begin{table}[htbp]
  \centering
  \caption{
    Test set accuracy of the pre-trained model and individual fine-tuned models on different downstream tasks.
    For continual fine-tuning, we fix the random seed to 42 and shuffle the task order, then report the average accuracy across 10 runs.
    For other methods, we repeat the experiment 10 times with different task orders.
    Here we abbreviate `Continual' as `C.' to save space.
  }
  \label{tab:continual_merging_acc}
  \vskip 0.15in
  \setlength{\tabcolsep}{1pt}
  \fontsize{8pt}{10pt}\selectfont
  \begin{tabular}{ccccccccccc}
    \toprule
    Model                & SUN397        & Cars          & RESISC45      & EuroSAT       & SVHN          & GTSRB         & MNIST         & DTD           & Flowers102    & PCAM          \\
    \midrule
    \multicolumn{11}{c}{\textit{CLIP-ViT-B/32}}                                                                                                                                          \\
    C. Fine-Tuned        & 53.9          & 38.2          & 64.7          & \textbf{98.7} & 45.4          & 34.4          & 86.7          & \textbf{58.4} & 57.5          & 67.7          \\
    Average (SWA)        & 64.2          & \textbf{59.6} & 64.8          & 60.9          & 47.3          & 43.1          & 71.8          & 46.4          & \textbf{66.5} & 63.9          \\
    C. Task Arithmetic   & 62.0          & 53.7          & 60.9          & 58.1          & 48.5          & 48.9          & 79.4          & 46.1          & 61.1          & 73.4          \\
    C. Ties-Merging      & 62.5          & 49.1          & 55.8          & 50.9          & 54.6          & 49.3          & 82.0          & 46.7          & 58.5          & 69.9          \\
    \textbf{OPCM (Ours)} & \textbf{64.4} & 51.1          & \textbf{66.0} & 71.7          & \textbf{66.1} & \textbf{56.0} & \textbf{90.2} & 40.4          & 64.9          & \textbf{80.2} \\
    \multicolumn{11}{c}{\textit{CLIP-ViT-B/16}}                                                                                                                                          \\
    C. Fine-Tuned        & 62.7          & 58.0          & 67.6          & \textbf{99.1} & 46.0          & 29.2          & 93.9          & \textbf{61.9} & 64.1          & 75.2          \\
    Average (SWA)        & 67.1          & \textbf{64.6} & 69.3          & 63.4          & 62.4          & 52.0          & 80.7          & 46.6          & 71.8          & 63.1          \\
    C. Task Arithmetic   & 65.8          & 57.5          & 63.8          & 59.5          & 64.7          & 54.0          & 88.8          & 45.3          & 67.5          & 67.1          \\
    C. Ties-Merging      & 64.2          & 52.9          & 60.9          & 53.0          & 62.8          & 48.8          & 88.4          & 45.0          & 61.3          & 68.5          \\
    \textbf{OPCM (Ours)} & \textbf{67.9} & 55.9          & \textbf{73.7} & 77.5          & \textbf{74.4} & \textbf{63.2} & \textbf{94.1} & 49.2          & \textbf{72.3} & \textbf{79.6} \\
    \multicolumn{11}{c}{\textit{CLIP-ViT-L/14}}                                                                                                                                          \\
    C. Fine-Tuned        & 69.5          & 73.6          & 78.3          & \textbf{99.2} & 59.3          & 49.3          & \textbf{98.6} & \textbf{69.7} & 83.2          & \textbf{78.3} \\
    Average (SWA)        & 70.7          & 77.7          & 76.4          & 75.3          & 69.5          & 62.1          & 93.7          & 57.7          & 80.0          & 73.6          \\
    C. Task Arithmetic   & 70.4          & 74.1          & 73.9          & 66.3          & 69.9          & 65.6          & 95.1          & 56.6          & 78.6          & 70.4          \\
    C. Ties-Merging      & 69.7          & 70.3          & 65.3          & 47.9          & 76.1          & 63.6          & 94.7          & 54.4          & 77.9          & 72.3          \\
    \textbf{OPCM (Ours)} & \textbf{73.1} & \textbf{78.3} & \textbf{82.4} & 80.2          & \textbf{80.8} & \textbf{80.4} & 97.4          & 61.6          & \textbf{84.8} & 76.3          \\
    \bottomrule
    \toprule
    Model                & FER2013       & OxfordIIITPet & STL10         & CIFAR100      & CIFAR10       & Food101       & FashionMNIST  & EMNIST        & KMNIST        & RenderedSST2  \\
    \midrule
    \multicolumn{11}{c}{\textit{CLIP-ViT-B/32}}                                                                                                                                          \\
    C. Fine-Tuned        & 58.3          & 68.5          & 86.7          & 40.2          & 70.5          & 50.0          & \textbf{90.7} & \textbf{72.4} & \textbf{54.5} & 54.5          \\
    Average (SWA)        & 50.2          & \textbf{84.1} & \textbf{97.0} & \textbf{69.8} & 92.7          & \textbf{80.4} & 71.3          & 15.0          & 11.5          & 61.8          \\
    C. Task Arithmetic   & 51.4          & 82.3          & 94.9          & 64.6          & 91.4          & 71.9          & 73.9          & 17.8          & 12.2          & 59.9          \\
    C. Ties-Merging      & 49.5          & 81.3          & 95.2          & 63.7          & 91.2          & 70.2          & 73.7          & 17.8          & 16.9          & 59.8          \\
    \textbf{OPCM (Ours)} & \textbf{58.5} & 82.9          & 95.9          & 67.6          & \textbf{92.8} & 74.0          & 76.3          & 22.4          & 18.3          & \textbf{64.6} \\
    \multicolumn{11}{c}{\textit{CLIP-ViT-B/16}}                                                                                                                                          \\
    C. Fine-Tuned        & \textbf{60.5} & 84.5          & 90.5          & 38.8          & 73.6          & 61.9          & \textbf{89.7} & \textbf{83.3} & \textbf{51.5} & \textbf{72.8} \\
    Average (SWA)        & 50.9          & 89.6          & \textbf{98.0} & 72.9          & 94.2          & \textbf{85.9} & 73.3          & 15.6          & 12.4          & 62.5          \\
    C. Task Arithmetic   & 50.7          & 89.3          & 97.0          & 68.0          & 93.1          & 80.3          & 75.7          & 18.1          & 16.7          & 61.8          \\
    C. Ties-Merging      & 50.4          & 87.9          & 96.3          & 63.1          & 91.7          & 78.0          & 75.0          & 23.4          & 24.9          & 61.5          \\
    \textbf{OPCM (Ours)} & 59.5          & \textbf{91.8} & 97.7          & \textbf{73.2} & \textbf{94.7} & 83.1          & 81.3          & 26.5          & 23.4          & 66.8          \\
    \multicolumn{11}{c}{\textit{CLIP-ViT-L/14}}                                                                                                                                          \\
    C. Fine-Tuned        & \textbf{68.0} & 92.1          & 94.5          & 60.5          & 85.7          & 74.8          & \textbf{93.1} & \textbf{89.0} & \textbf{59.2} & \textbf{78.8} \\
    Average (SWA)        & 52.7          & 94.2          & \textbf{99.2} & 81.7          & 97.0          & 90.7          & 77.4          & 16.1          & 10.4          & 66.1          \\
    C. Task Arithmetic   & 55.7          & 94.2          & 98.6          & 79.1          & 96.6          & 87.6          & 80.8          & 17.6          & 10.6          & 63.6          \\
    C. Ties-Merging      & 57.6          & 93.5          & 97.8          & 74.0          & 95.6          & 84.7          & 79.7          & 20.2          & 12.6          & 58.4          \\
    \textbf{OPCM (Ours)} & 61.8          & \textbf{95.4} & \textbf{99.2} & \textbf{83.0} & \textbf{97.8} & \textbf{90.9} & 86.0          & 26.4          & 14.7          & 71.0          \\
    \bottomrule
  \end{tabular}
\end{table}

\begin{figure}[t]
  \centering
  \begin{subfigure}[b]{0.33\linewidth}
    \centering
    \includegraphics[width=\linewidth]{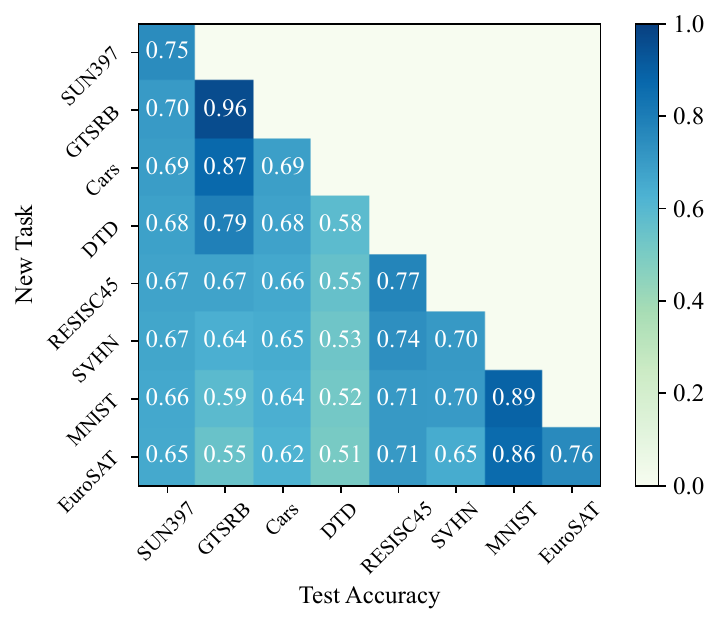}
    \caption{ViT-B/32 on 8 tasks}
    \label{fig:vit-b-32-continual_merging_acc_matrix_8}
  \end{subfigure}
  \begin{subfigure}[b]{0.33\linewidth}
    \centering
    \includegraphics[width=\linewidth]{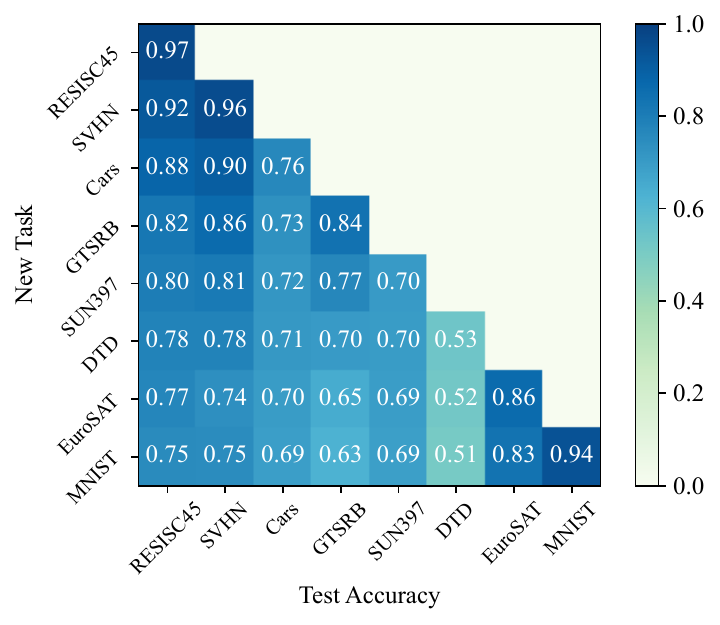}
    \caption{ViT-B/16 on 8 tasks}
    \label{fig:vit-b-16-continual_merging_acc_matrix_8}
  \end{subfigure}
  \begin{subfigure}[b]{0.33\linewidth}
    \centering
    \includegraphics[width=\linewidth]{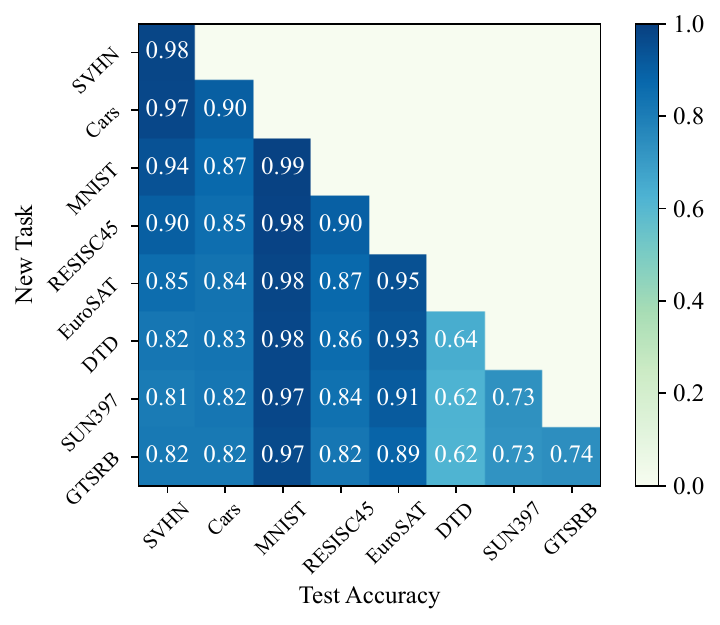}
    \caption{ViT-L/14 on 8 tasks}
    \label{fig:vit-l-14-continual_merging_acc_matrix_8}
  \end{subfigure}
  \begin{subfigure}[b]{0.49\linewidth}
    \centering
    \includegraphics[width=\linewidth]{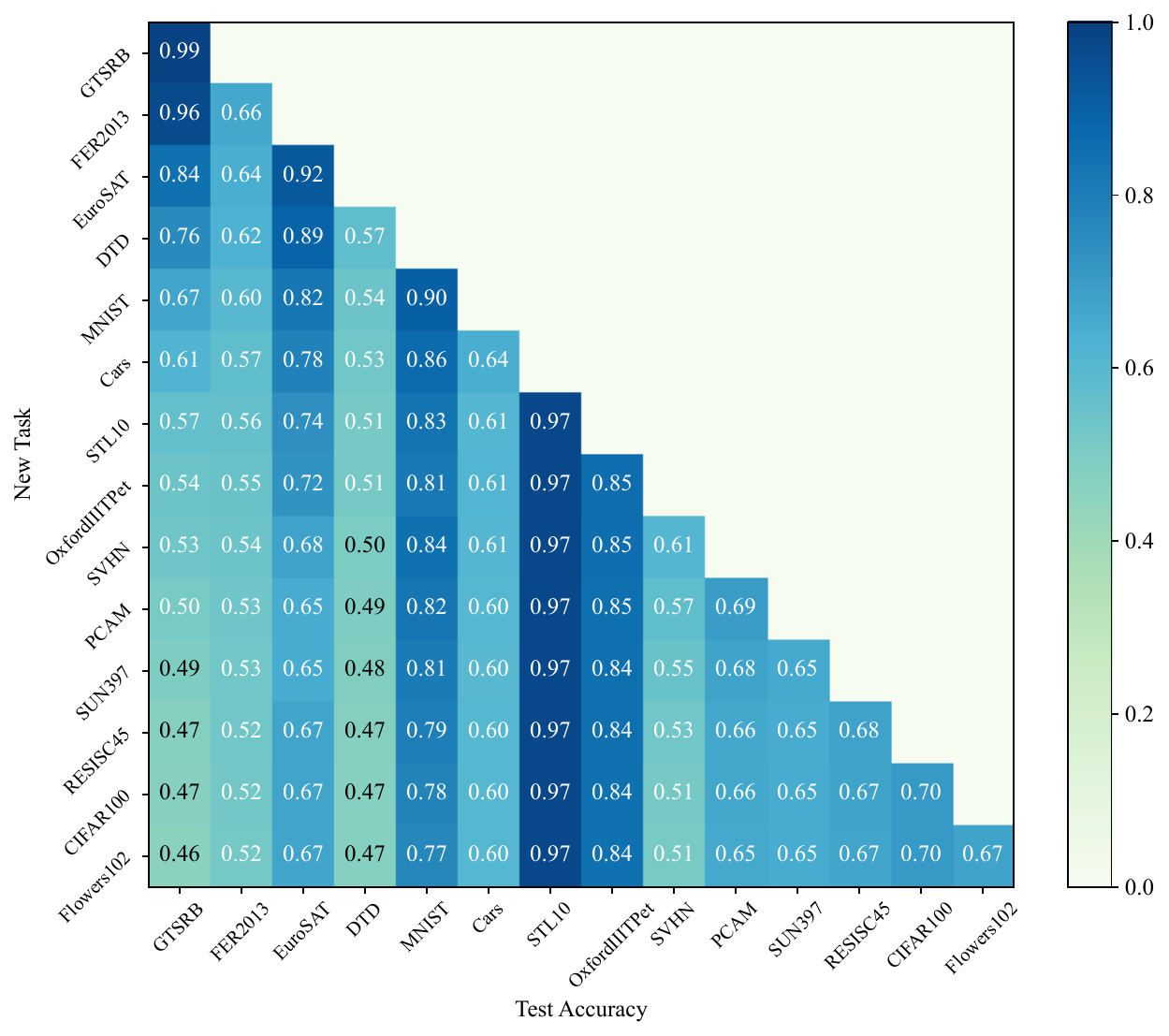}
    \caption{ViT-B/32 on 14 tasks}
    \label{fig:vit-b-32-continual_merging_acc_matrix_14}
  \end{subfigure}
  \begin{subfigure}[b]{0.49\linewidth}
    \centering
    \includegraphics[width=\linewidth]{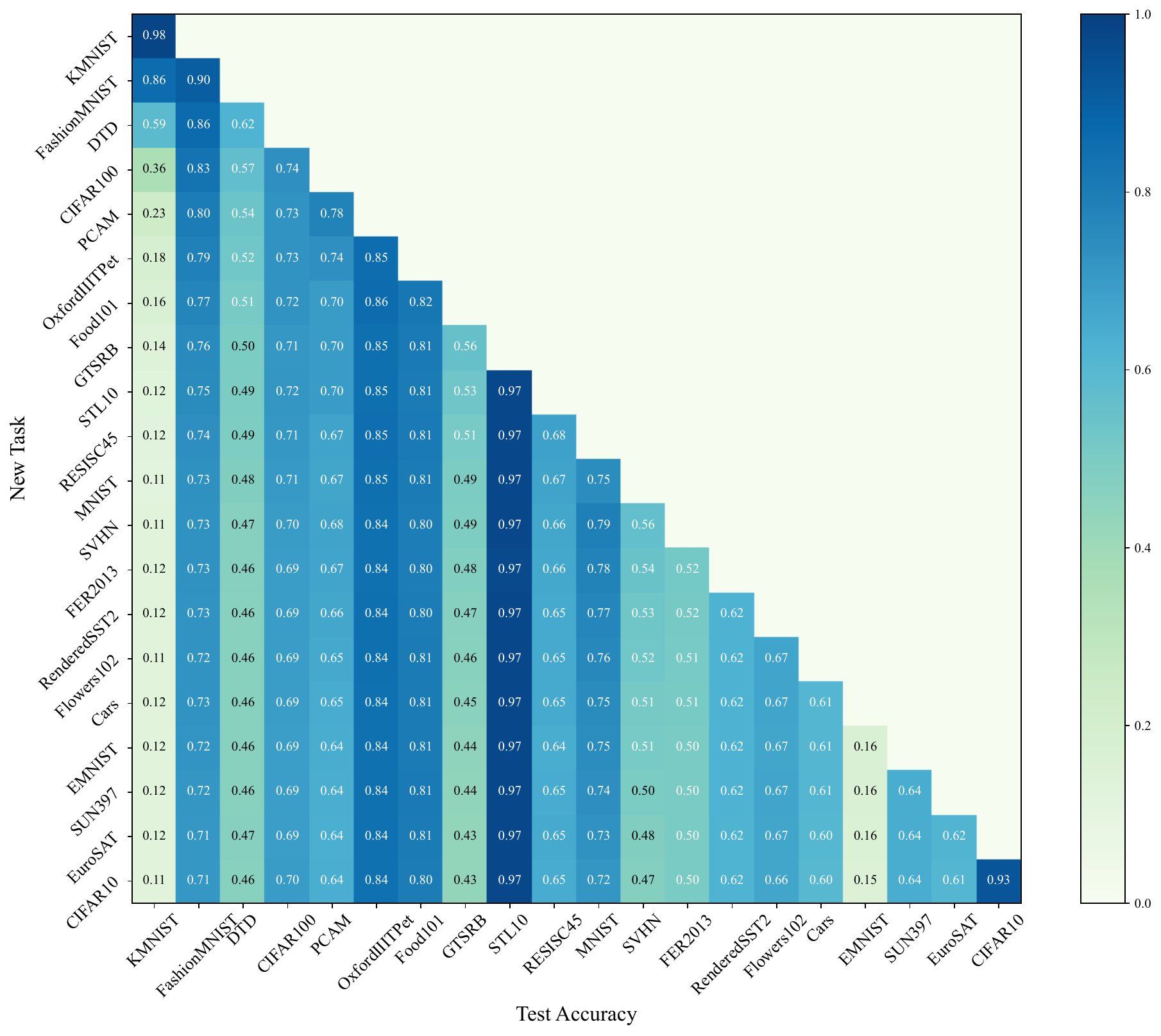}
    \caption{ViT-B/32 on 20 tasks}
    \label{fig:vit-b-32-continual_merging_acc_matrix_20}
  \end{subfigure}
  \caption{Here we show the accuracy matrix of the continual multi-task model merging using weight averaging for different model architectures and task sets. The order of the tasks is randomly shuffled for each run.}
  \label{fig:continual_merging_acc_matrix}
\end{figure}

We conduct the continual multi-task model merging experiments on the three task groups using different sizes of CLIP-ViT models (ViT-B/32, ViT-B/16, and ViT-L/14).
For each task group and model size, we repeat the experiment 10 times with different task orders to investigate the effect of task order on the performance of the continual multi-task model merging.

The accuracy matrix is structured as follows: each row corresponds to a new task-specific model incorporated into the merged model, while each column displays the test accuracy for individual tasks after merging.
Specifically, for the $i$-th row, the $j$-th column represents the test accuracy of the $j$-th task for the merged model $\theta_{\text{merged}}^{(i)}$, i.e. $\text{ACC}_j(\theta_{\text{merged}}^{(i)})$.
The diagonal elements represent the performance on the newly added task, while off-diagonal elements show how well the model maintains performance on previously learned tasks.

In Figure~\ref{fig:continual_merging_acc_matrix}, we show the accuracy matrix of the continual multi-task model merging using weight averaging for different model architectures and task sets.
These accuracy matrices reveals several key insights about the continual multi-task model merging process.
First, we observe a general trend of accuracy degradation as more tasks are added.
This suggests some degree of catastrophic forgetting occurs during continual merging.
Second, larger models (ViT-L/14) show better resistance to this degradation compared to smaller models (ViT-B/32), maintaining higher accuracies across tasks.
Third, when comparing task sets of varying sizes (8, 14, and 20 tasks), it becomes evident that maintaining consistent performance across all tasks grows increasingly challenging as the number of tasks expands. This is reflected in the generally lower accuracy values observed in the 14- and 20-task matrices.
Finally, we note that the performance decline is particularly significant for certain tasks, such as KMNIST and EMNIST.
This could be attributed to the fact that these tasks exhibit the greatest divergence from the pre-trained model and the other tasks in the set.

The experimental results in Table~\ref{tab:continual_merging_acc} demonstrate the effectiveness of different model merging approaches across three CLIP-ViT architectures (B/32, B/16, and L/14) on 20 downstream tasks.
It is observed that OPCM (the proposed method) outperforms other continual model merging approaches in a majority of tasks.
For instance, on ViT-B/32, OPCM achieves the best performance on several tasks including SUN397 (64.4\%), RESISC45 (66.0\%), SVHN (66.1\%), GTSRB (56.0\%), MNIST (90.2\%), and PCAM (80.2\%), FER2013 (58.5\%), CIFAR10 (92.8\%), and RenderedSST2 (64.6\%).
The results also reveal that certain tasks, particularly EMNIST and KMNIST, remain challenging for all continual model merging methods, though OPCM generally maintains better performance even on these difficult cases.

\subsection{Ablation Studies}
\label{sec:ablations}

\begin{figure}[t]
  \centering
  \begin{center}
    \begin{subfigure}[b]{0.33\linewidth}
      \centering
      \includegraphics[width=\linewidth]{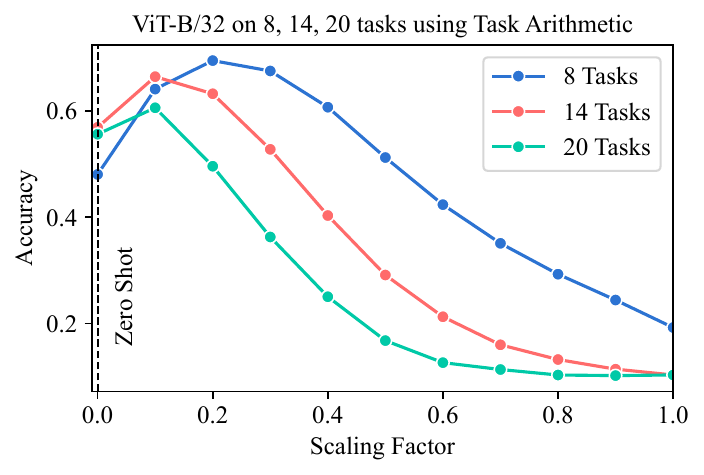}
      \caption{Task Arithmetic}
    \end{subfigure}
    \hspace{0.05\linewidth}
    \begin{subfigure}[b]{0.33\linewidth}
      \centering
      \includegraphics[width=\linewidth]{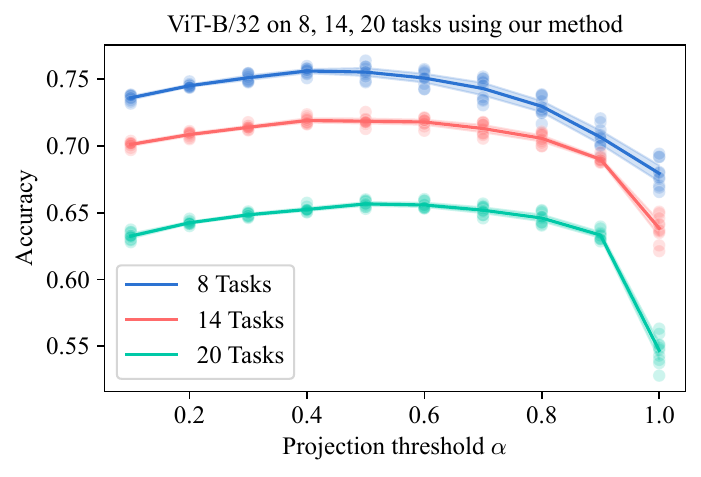}
      \caption{OPCM (Ours)}
    \end{subfigure}
  \end{center}
  \caption{
    Ablations on the hyper-parameter settings.
    We compare our method with Task Arithmetic.
    This figure highlights the robustness of our method to different hyper-parameter settings.
  }
  \label{fig:vit-b-32-8-14-20-alpha}
  \vskip -0.4cm
\end{figure}

\begin{figure}[ht]
  \centering
  \begin{subfigure}[b]{0.33\linewidth}
    \centering
    \includegraphics[width=\linewidth]{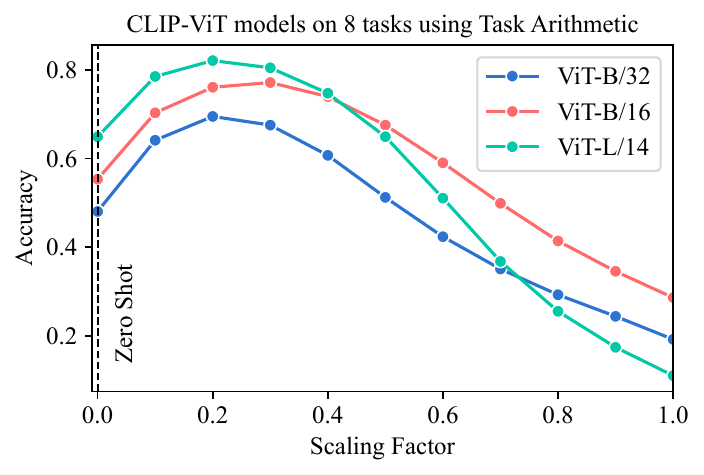}
  \end{subfigure}
  \hfill
  \begin{subfigure}[b]{0.33\linewidth}
    \centering
    \includegraphics[width=\linewidth]{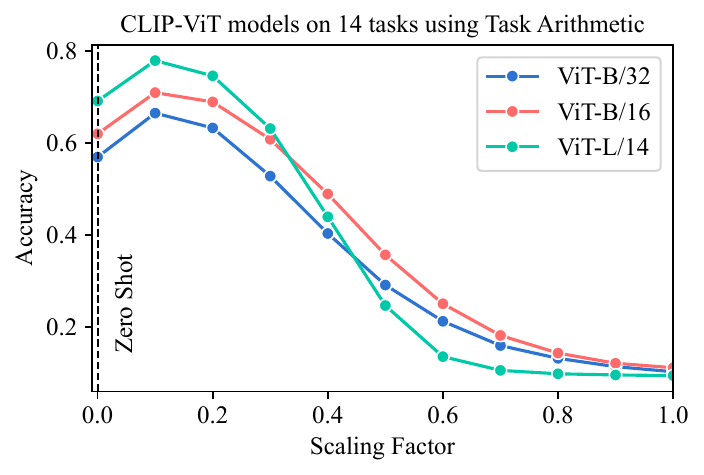}
  \end{subfigure}
  \hfill
  \begin{subfigure}[b]{0.33\linewidth}
    \centering
    \includegraphics[width=\linewidth]{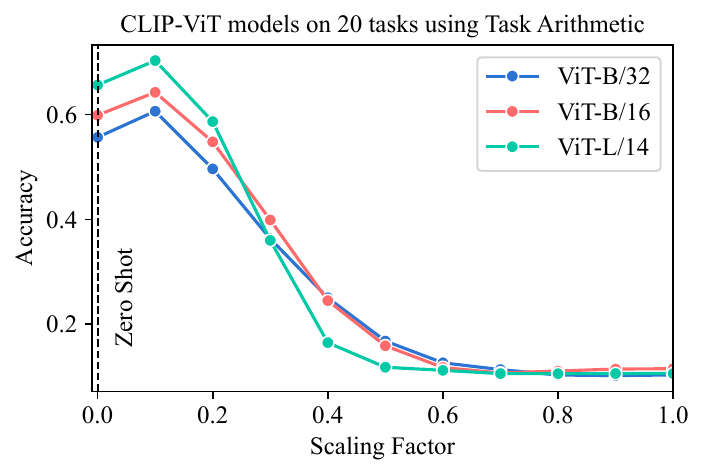}
  \end{subfigure}
  \caption{The average accuracy of Task Arithmetic with varying scaling factors.}
  \label{fig:vit-task_arithmetic-alpha}
\end{figure}

\begin{figure}[ht]
  \centering
  \begin{subfigure}[b]{0.33\linewidth}
    \centering
    \includegraphics[width=\linewidth]{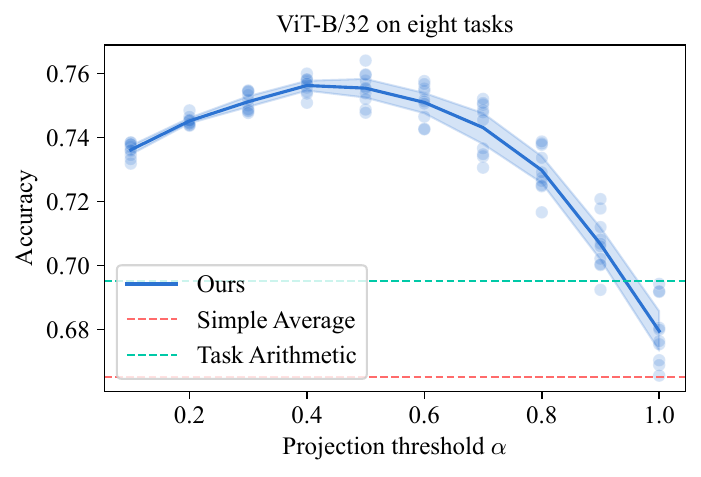}
    \caption{ViT-B/32 on eight tasks}
    \label{fig:vit-b-32-ta8-alpha}
  \end{subfigure}
  \hfill
  \begin{subfigure}[b]{0.33\linewidth}
    \centering
    \includegraphics[width=\linewidth]{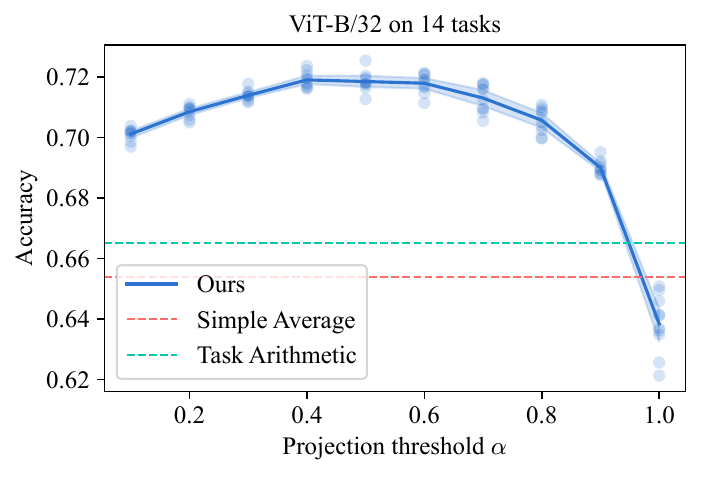}
    \caption{ViT-B/32 on 14 tasks}
    \label{fig:vit-b-32-tall14-alpha}
  \end{subfigure}
  \hfill
  \begin{subfigure}[b]{0.33\linewidth}
    \centering
    \includegraphics[width=\linewidth]{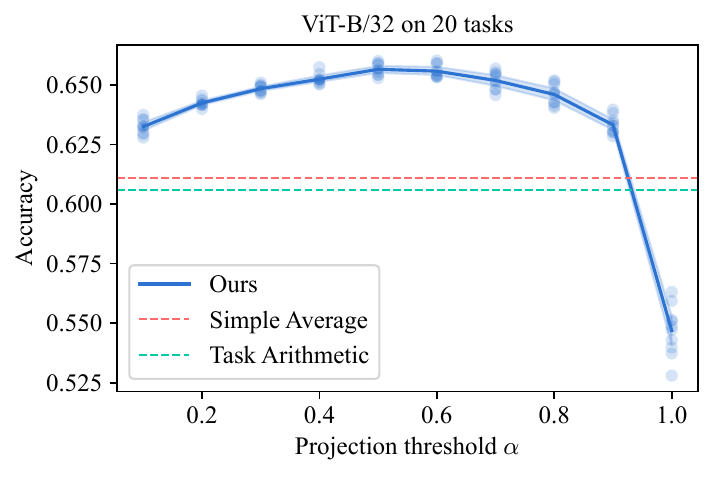}
    \caption{ViT-B/32 on 20 tasks}
    \label{fig:vit-b-32-tall20-alpha}
  \end{subfigure}
  \begin{subfigure}[b]{0.33\linewidth}
    \centering
    \includegraphics[width=\linewidth]{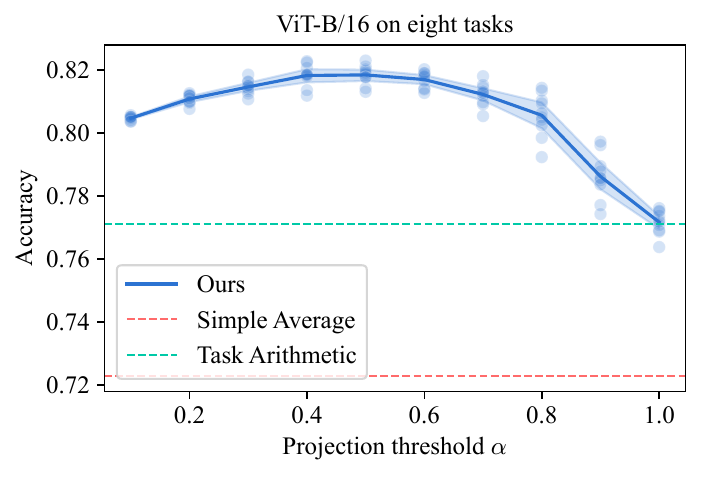}
    \caption{ViT-B/16 on 8 tasks}
    \label{fig:vit-b-16-ta8-alpha}
  \end{subfigure}
  \hfill
  \begin{subfigure}[b]{0.33\linewidth}
    \centering
    \includegraphics[width=\linewidth]{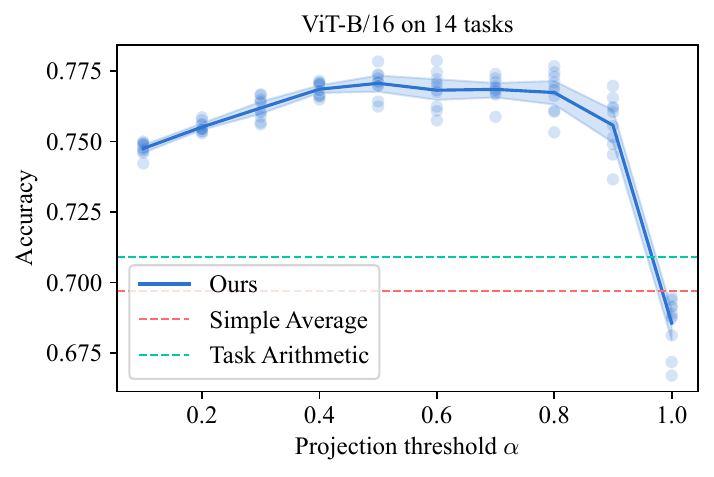}
    \caption{ViT-B/16 on 14 tasks}
    \label{fig:vit-b-16-tall14-alpha}
  \end{subfigure}
  \hfill
  \begin{subfigure}[b]{0.33\linewidth}
    \centering
    \includegraphics[width=\linewidth]{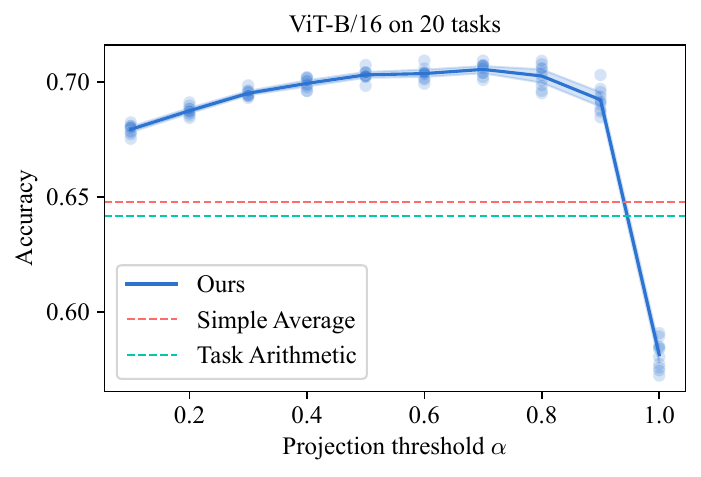}
    \caption{ViT-B/16 on 20 tasks}
    \label{fig:vit-b-16-tall20-alpha}
  \end{subfigure}
  \begin{subfigure}[b]{0.33\linewidth}
    \centering
    \includegraphics[width=\linewidth]{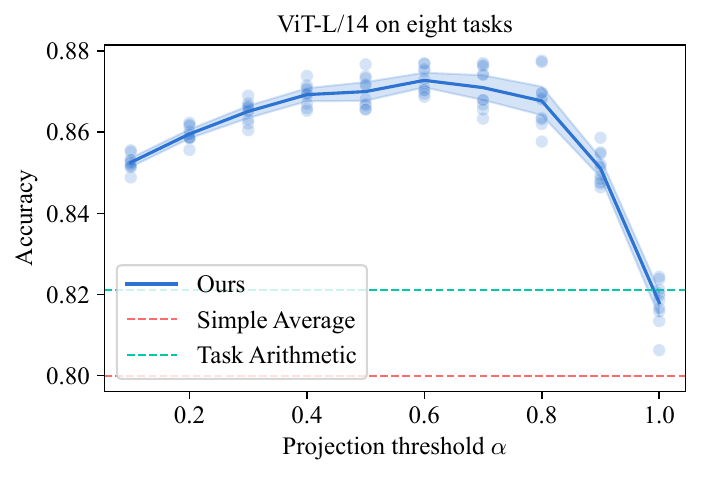}
    \caption{ViT-L/14 on 8 tasks}
    \label{fig:vit-l-14-ta8-alpha}
  \end{subfigure}
  \hfill
  \begin{subfigure}[b]{0.33\linewidth}
    \centering
    \includegraphics[width=\linewidth]{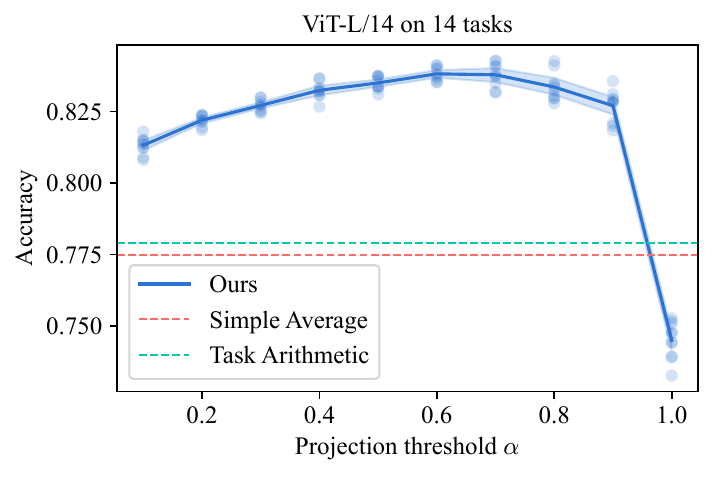}
    \caption{ViT-L/14 on 14 tasks}
    \label{fig:vit-l-14-tall14-alpha}
  \end{subfigure}
  \hfill
  \begin{subfigure}[b]{0.33\linewidth}
    \centering
    \includegraphics[width=\linewidth]{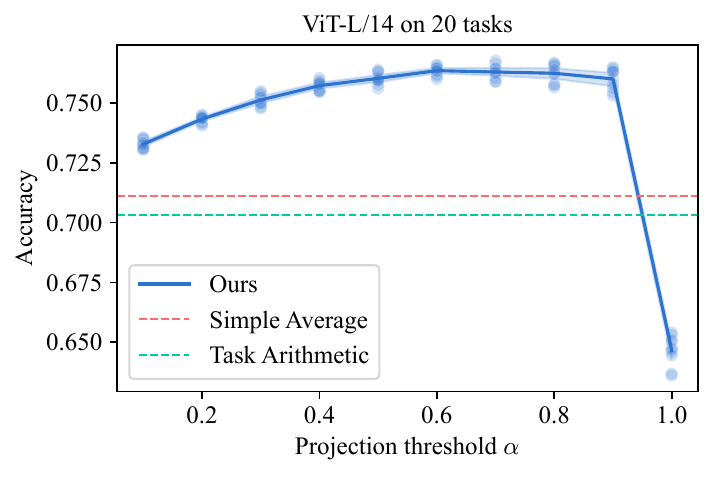}
    \caption{ViT-L/14 on 20 tasks}
    \label{fig:vit-l-14-tall20-alpha}
  \end{subfigure}
  \caption{The effect of the projection threshold $\alpha$ on the performance of the merged CLIP-ViT models.}
  \label{fig:vit-alpha}
\end{figure}

In this section, we provide additional details on the ablation studies.
We perform a grid search to systematically evaluate the scaling factor for Task Arithmetic and the projection threshold $\alpha$ for our proposed method.
This analysis is conducted across a range of model architectures, including ViT-B/32, ViT-B/16, and ViT-L/14, as well as diverse task sets comprising 8, 14, and 20 tasks.
The hyperparameter search space is as follows:
\begin{itemize}[topsep=0cm, parsep=0cm, itemsep=0cm]
  \item Scaling factor for Task Arithmetic: $\{0.0~(\text{zero-shot}), 0.1, 0.2, 0.3, 0.4, 0.5, 0.6, 0.7, 0.8, 0.9, 1.0\}$.
  \item Projection threshold $\alpha$ for our proposed method: $\{0.1, 0.2, 0.3, 0.4, 0.5, 0.6, 0.7, 0.8, 0.9, 1.0\}$.
\end{itemize}

Figure~\ref{fig:vit-task_arithmetic-alpha} demonstrates the impact of adjusting the scaling factor in Task Arithmetic across various model architectures (ViT-B/32, ViT-B/16, and ViT-L/14) and task sets (8, 14, and 20 tasks). The findings reveal that model performance is highly dependent on the choice of scaling factor, with distinct optimal values identified for different task sets. Specifically, a scaling factor of 0.3 yields the best results for eight tasks, while a factor of 0.1 proves optimal for both 14 and 20 tasks. These results highlight the importance of carefully selecting the scaling factor to maximize performance in multi-task learning scenarios.

Figure~\ref{fig:vit-alpha} shows the influence of the projection threshold $\alpha$ on the merged model performance, revealing several key insights.
First, the optimal value of $\alpha$ consistently lies within the range of 0.4 to 0.6, regardless of the number of tasks. Second, performance remains remarkably stable within this range, highlighting the robustness of our method to slight variations in the choice of $\alpha$. Third, as the number of tasks increases from 8 to 20, the optimal range for $\alpha$ remains similar, demonstrating its scalability across different task complexities. Fourth, when compared to baseline methods such as Task Arithmetic, our approach consistently outperforms them across a broad spectrum of $\alpha$ values ($\alpha \leq 0.9$). These findings underscore the effectiveness of our method in facilitating efficient knowledge transfer and mitigating catastrophic forgetting, making it a reliable solution for continual multi-task model merging.


\end{document}